\documentclass[twocolumn, journal]{IEEEtran}
\usepackage{graphicx}
\usepackage{epstopdf}
\usepackage{amsthm}
\theoremstyle{definition}
\usepackage{epsfig}
\usepackage{latexsym}
\usepackage{amsfonts}
\usepackage{here}
\usepackage{rawfonts}
\usepackage{calc}
\usepackage{capitalgreekitalic}
\usepackage{url}
\usepackage{enumerate}
\usepackage{color}
\usepackage[tbtags]{amsmath}
\usepackage{amssymb}
\usepackage{upref}
\usepackage{epic,eepic}
\usepackage{times}
\usepackage{dsfont}
\usepackage{comment}
\usepackage{cite}
\usepackage{bbm}
\usepackage{bm}
\usepackage{amsmath}

\allowdisplaybreaks[4]

\newtheorem{theorem}{\bf Theorem}

\newtheorem{lemma}{\bf Lemma}

\usepackage{dsfont}

\newcounter{step}
\newlength{\totlinewidth}
  {\end{list}%
  \rule{\linewidth}{1pt}}
\newcounter{substep}

  {\end{list}}

\newlength{\aligntop}
\newlength{\alignbot}
 \makeatletter
\renewenvironment{align}{%
  \vspace{\aligntop}
  \start@align\@ne\st@rredfalse\m@ne
}{%
  \math@cr \black@\totwidth@
  \egroup
  \ifingather@
    \restorealignstate@
    \egroup
    \nonumber
    \ifnum0=`{\fi\iffalse}\fi
  \else
    $$%
  \fi
  \ignorespacesafterend%
  \vspace{\alignbot}\par\noindent
} \makeatother

\IEEEoverridecommandlockouts

\usepackage{algorithm}
\usepackage{algorithmic}

\usepackage{booktabs}
\usepackage{multirow}
\usepackage[table]{xcolor} 

\usepackage{subfigure}

\begin{document}
\title{Wireless Federated Multi-Task LLM Fine-Tuning via Sparse-and-Orthogonal LoRA}
\author{
{Nuocheng Yang,} \emph{Student Member, IEEE}, 
{Sihua Wang,} \emph{Member, IEEE},  
\\ {Ouwen Huan,} \emph{Student Member, IEEE}, 
{Mingzhe Chen}, \emph{Senior Member, IEEE},
\\{Tony Q. S. Quek}, \emph{Fellow, IEEE}, and {Changchuan Yin}, \emph{Senior Member, IEEE}.
\thanks{N. Yang, S. Wang, O. Huan, and C. Yin are with the Beijing Laboratory of Advanced Information Network, and the Beijing Key Laboratory of Network System Architecture and Convergence, Beijing University of Posts and Telecommunications, Beijing 100876, China (emails: \{yangnuocheng, sihuawang, ouwenh, ccyin\}@bupt.edu.cn).}
\thanks{M. Chen is with the Department of Electrical and Computer Engineering and Institute for Data Science and Computing, University of Miami, Coral Gables, FL, 33146 USA (email: \protect\url{mingzhe.chen@miami.edu}).}
\thanks{Tony Q. S. Quek with the Information Systems Technology and Design Pillar, Singapore University of Technology and Design, 487372, Singapore (email: \protect\url{tonyquek@sutd.edu.sg}).}
}
\maketitle

\thispagestyle{empty}
\pagestyle{empty}

\begin{abstract}
Decentralized federated learning (DFL) based on low-rank adaptation (LoRA) enables mobile devices with multi-task datasets to collaboratively fine-tune a large language model (LLM) by exchanging locally updated parameters with a subset of neighboring devices via wireless connections for knowledge integration.
However, directly aggregating parameters fine-tuned on heterogeneous datasets induces three primary issues across the DFL life-cycle: 
(i) \textit{catastrophic knowledge forgetting during fine-tuning process}, arising from conflicting update directions caused by data heterogeneity; 
(ii) \textit{inefficient communication and convergence during model aggregation process}, due to bandwidth-intensive redundant model transmissions; and 
(iii) \textit{multi-task knowledge interference during inference process}, resulting from incompatible knowledge representations coexistence during inference.
To address these issues in a fully decentralized scenario, we first propose a sparse-and-orthogonal LoRA that ensures orthogonality between model updates to eliminate direction conflicts during fine-tuning.
Then, we analyze how device connection topology affects multi-task performance, prompting a cluster-based topology design during aggregation.
Finally, we propose an implicit mixture of experts (MoE) mechanism to avoid the coexistence of incompatible knowledge during inference.
Simulation results demonstrate that the proposed approach effectively reduces communication resource consumption by up to $73\%$ and enhances average performance by $5\%$ compared with the traditional LoRA method.
\end{abstract}

\begin{IEEEkeywords}
Decentralized federated learning, large language model (LLM), catastrophic knowledge forgetting, mixture of experts.
\end{IEEEkeywords}

\IEEEpeerreviewmaketitle
\vspace{-0.3cm}
\section{Introduction}
Large language models (LLMs) have demonstrated their comprehension and reasoning capabilities across general tasks, surpassing traditional artificial intelligence (AI) models tailored to a single task \cite{MobileLLaMA,TowardsCommunicationEfficientMultiAgentCooperationsReinforcementLearningandLLM, LargeLanguageModelsforNetworking, TokenLevelLLMCollaboration}.
Despite their capabilities in general tasks, adapting to specific downstream applications requires continuous fine-tuning based on the low-rank adaptation (LoRA) method \cite{LoRA} and multi-task dataset distributed across devices \cite{FederatedLargeLanguageModel}.
To fine-tune LLMs on multi-task datasets, traditional centralized methods usually require a server for raw data or model updates collection from massive devices through wireless connections, which introduces a transmission bottleneck and privacy concerns.
To address these issues, a decentralized federated learning (DFL) framework was proposed, which leverages devices to collaboratively fine-tune a global model \cite{WirelessCommunicationsCollaborativeFederatedLearning}. 
In this paradigm, devices exchange locally fine-tuned model parameters via device-to-device connections for aggregation, thereby eliminating the dependency on centralized coordination.

However, directly aggregating model updates fine-tuned on heterogeneous downstream tasks' datasets from all available neighbors in DFL may lead to unstable and inefficient convergence, which is primarily caused by three issues across the DFL life-cycle: \textit{catastrophic knowledge forgetting}, \textit{inefficient communication and convergence}, and \textit{multi-task knowledge interference} \cite{NatureHumanBehaviour,DiameterConstrainedTopology,OMoE}.
\textcolor{black}{Firstly, \textit{catastrophic knowledge forgetting} stems from the directional conflict between local and aggregated neighbor's updates caused by data heterogeneity, which leads to the forgetting of prior knowledge during the fine-tuning process \cite{NatureHumanBehaviour}.}
Secondly, \textit{inefficient communication and convergence} caused by the redundant inter-device model exchanges, which force the aggregation of conflicting updates that impede convergence, leading to catastrophic forgetting and wasted bandwidth during aggregation process \cite{DiameterConstrainedTopology}.
\textcolor{black}{Thirdly, \textit{multi-task knowledge interference} induced by merging multiple task-specific updates into a single aggregated model, which blurs task boundaries and leads to output distortion during the inference process \cite{OMoE}.}

To mitigate the \textit{catastrophic knowledge forgetting}, the existing works have proposed the orthogonality subspace updates method for eliminating direction conflict between newly aggregated model updates and the prior model updates \cite{OrthogonalGradientDescentforContinualLearning,wang-etal-2023-orthogonal,AdaptivePlasticityImprovementforContinualLearning,yang2024parameter}.
This method ensures that the model updates fine-tuned on heterogeneous downstream tasks are projected onto orthogonal subspaces, thereby confining interference to distinct dimensions and effectively mitigating forgetting.
However, in these studies, the subspaces of model update evolve continuously during the fine-tuning process, necessitating a centralized server for model update collection and enforce orthogonality, which is incompatible with the DFL setting.

To address \textit{inefficient communication and convergence}, the relationship between multi-task performance and device connection schemes must be analyzed. 
Nevertheless, existing works \cite{mypaper,ADFlora,LearningtoCollaborate} overlooked the impact of model parameter orthogonality and failed to account for the unique constraints of DFL, where devices must autonomously determine a subset of neighbours for local model-update transmission.

To reduce the \textit{multi-task knowledge interference}, mixture of experts (MoE) methods are employed that regard model parameters fine-tuned on heterogeneous datasets as experts independently and introduce an additional MoE router for expert selection, and hence achieving knowledge isolation during the inference process \cite{FFTMoE,dFLMoE,MoORE}.
Even so, training the MoE router incurs computational overhead that scales with the number of devices, burdening the resource-constrained devices in DFL.
Furthermore, the parameter isolation inherent in MoE design completely hinders the integration of knowledge among devices.

To fill these gaps, in this paper, we develop a collaborative LLM fine-tuning framework in DFL by integrating a sparse-and-orthogonal LoRA method, a task-aware implicit MoE mechanism, and a cluster-based device connection topology design.
Specifically, our key contributions are as follows:
\begin{itemize}
    \item \textbf{Sparse-and-Orthogonal LoRA for Reducing Catastrophic Knowledge Forgetting during the Fine-tuning Process:}
    We decompose parameter updates into a static low-rank projection matrix, sampled from a Gauss-sampled, and a low-rank, sparse-activated expansion matrix.
    Consequently, the model updates fine-tuned on heterogeneous datasets are projected onto mutually orthogonal subspaces; this property is maintained by the fixed projection matrices, which inherit their orthogonality from independent Gaussian sampling.
    Furthermore, the sparse activated expansion matrix is introduced to further mitigate parameter collision while reducing the number of parameters that need to be updated and transmitted.
    \item \textbf{Cluster-based Device Connection Topology Design for Communication-Efficient and DFL Acceleration during the Model Aggregation Process:}
    Based on the proposed sparse-and-orthogonal LoRA, we further propose a novel device connection topology design for efficient model updates transmission in multi-task DFL.
    Specifically, we first analyze how the device connection topology affects the cumulative error of the aggregated LoRA representation, which affects the fine-tuning performance.
    Then, we develop a cluster-based device connection topology design that can accelerate DFL convergence while reducing communication overhead.
    \item \textbf{Implicit MoE Design for Minimizing Multi-task Knowledge Interference during the Inference Process:}
    We propose an implicit MoE mechanism that embeds task-specific information into the static projection matrix, coupled with a top-$k$ expert activation strategy. 
    This method selectively activates a subset of parameters within the aggregated expansion matrix (regarded as experts) that exhibit the highest responsiveness to the input tokens.
    Therefore, the projection matrix embedded with task-specific information can act as an implicit MoE router that activates to the related experts, while filtering out interference from irrelevant experts without an additional training process.
\end{itemize}

An extensive experiment shows that the proposed method can effectively reduce communication overhead by up to $73\%$ while improve overall LLM performance by up to $5\%$ compared with the traditional LoRA method.
\section{Related Works}
\subsection{Decentralized Federated Learning based on LoRA}
LoRA is one of the widely used parameter-efficient fine-tuning techniques that enables resource-constrained devices to adapt pre-trained LLMs with task-specific knowledge locally \cite{LoRA}. 
In particular, LoRA integrates trainable adapters into each pre-trained LLM layer.
Each adapter is characterized by a pair of low-rank matrices, named the projection matrix $\boldsymbol{A}$ and expansion matrix $\boldsymbol{B}$, respectively. 
During fine-tuning, only these low-rank matrices are updated while the pre-trained LLM remain frozen.
Since low-rank matrices contain significantly fewer parameters than the pre-trained LLM, this approach drastically reduces the computation overhead of fine-tuning.

Furthermore, multi-task fine-tuning of LLM traditionally relies on a centralized server to collect raw data or updated models from all participating devices, which incurs a prohibitive transmission bottleneck and raises significant privacy concerns
\cite{FederatedFineTuningofLLMs,FedQLoRA,FedALoRA}.
To fill these gaps without centralization constraints, DFL, which enables devices to exchange updated local matrices for integration, has been proposed.
To further accommodate the heterogeneous computational and communication resources of individual devices in DFL, strategies such as adaptive rank selection \cite{Communication-EfficientWirelessFederatedFine-TuningforLarge-ScaleAIModels}, adaptive trainable layers selection \cite{FedHeLLo}, and adaptive adapters quantization strategy \cite{FedQuad} have been explored.
However, directly aggregating model updates on heterogeneous datasets poses two key bottlenecks in DFL.
First, due to the datasets' heterogeneity, the adapters exhibit divergent update directions; these updates tend to offset the datasets' heterogeneity, resulting in \textit{catastrophic knowledge forgetting}.
Second, the coexisting adapters trained based on heterogeneous datasets in the aggregated adapter may cause \textit{multitask knowledge interference} due to the competing objectives of different downstream tasks.

\subsection{Interference-free with Orthogonality Constraints}
To address the \textit{catastrophic knowledge forgetting} problem caused by interference between adapters with divergent update directions, several studies have proposed a model orthogonality-based approach \cite{OrthogonalGradientDescentforContinualLearning,wang-etal-2023-orthogonal,AdaptivePlasticityImprovementforContinualLearning,yang2024parameter}.
The authors in \cite{OrthogonalGradientDescentforContinualLearning} first attributed \textit{catastrophic knowledge forgetting} to conflicts between model updates.
Then, they proposed an orthogonal gradient descent (OGD) method to prevent conflicts in model update directions by introducing a direction orthogonal constraint into the local objective function.
Note that, to ensure the mutual orthogonality between model update directions, all devices' model updates must be collected and constrained simultaneously. 
Inspired by \cite{OrthogonalGradientDescentforContinualLearning}, the authors in \cite{wang-etal-2023-orthogonal} first introduced OGD into multi-task LLM fine-tuning, and the authors in \cite{AdaptivePlasticityImprovementforContinualLearning} expanded it to the federated learning scenario, where the centralized server collects model updates from devices and ensures orthogonality.
Beyond maintaining orthogonality via explicit constraints, the authors in \cite{yang2024parameter} demonstrated that parameter orthogonality can also be achieved by ensuring non-overlapping updates across multi-tasks.
However, these works \cite{OrthogonalGradientDescentforContinualLearning,wang-etal-2023-orthogonal,AdaptivePlasticityImprovementforContinualLearning,yang2024parameter} can only be employed with a centralized server to coordinate the disruption of the orthogonality caused by local fine-tuning and non-parameter collisions between updates.
On the other hand, the absence of a central server in DFL, coupled with constrained communication resources, prevents devices from accessing all participants' updates to ensure orthogonality \cite{WirelessCommunicationsCollaborativeFederatedLearning}. 

\subsection{Multi-task Knowledge Isolation based on MoE}
To address \textit{multi-task knowledge interference} caused by multiple knowledge coexisting in the aggregated adapter, several works \cite{FFTMoE,dFLMoE,MoORE} have focused on MoE based approaches.
These methods treat each adapter fine-tuned on heterogeneous datasets as an expert and introduce a router to assign inputs to the most appropriate subset of experts.
The authors in \cite{FFTMoE} initialized FL with MoE-based adapters, where each device trains a lightweight gating network to selectively activate a personalized subset of experts to enhance the multi-task fine-tuning performance.
Building upon a similar intuition, the authors in \cite{dFLMoE} employed an attention-based MoE router to manage input dispatching within a DFL framework.
However, in these works \cite{FFTMoE,dFLMoE}, both the number of experts and the router's training overhead scale linearly with the increasing number of devices, leading to significant scalability challenges in large-scale networks.
To eliminate the drawbacks caused by an additional MoE router, the authors in \cite{MoORE} employed singular value decomposition (SVD) to decompose adapters into experts, which are dynamically engaged according to their singular values.
These works \cite{FFTMoE,dFLMoE,MoORE} also face challenges in aggregating experts into the converged MoE models, particularly in balancing expert utilization and designing effective mixing strategies. 
Additionally, due to limited communication capability, devices can only exchange updates with a few neighbors, necessitating an efficient connection topology to optimize expert aggregation and accelerate MoE convergence.

\subsection{Communication-Efficient Topology for DFL Convergence Acceleration}
Due to the limited communication resources of each device in the DFL, they must select a subset of neighbors for exchanging the locally updated adapters, which also impacts communication overhead and DFL convergence speed.
An inefficient device connection scheme not only incurs substantial communication overhead but also prevents the aggregated model from converging, due to \textit{catastrophic knowledge forgetting} and \textit{multi-task knowledge interference} problems.
To fill these gaps, the authors in \cite{mypaper} analyzed how device connection topology affects the model transmission consumption and the convergence speed in DFL with single-task and proposed a decentralized graph neural network based topology design approach.
The authors in \cite{ADFlora} proved that the device connection topology also impacts the knowledge aggregation order in single-task learning, and they optimize the device connection topology by considering graph connectivity and model alignment.
The authors in \cite{LearningtoCollaborate} proposed a device topology design method based on model distillation and feature mapping method to simultaneously enhance DFL aggregation performance and ensure security.
However, these works \cite{mypaper,ADFlora,LearningtoCollaborate} still suffer from \textit{catastrophic knowledge forgetting} and \textit{multi-task knowledge interference}.
Therefore, it is desirable to design a new collaborative LLM fine-tuning method in the DFL scenario by jointly considering the \textit{catastrophic knowledge forgetting}, \textit{multi-task knowledge interference}, and the \textit{substantial communication resource
consumption} under DFL resource constraints.

\section{System Setup and Problem Formulation}
Consider a distributed wireless network consisting of a set $\mathcal{M}$ of $M$ devices that collaboratively train an LLM with $L$ layers that can adapt to multiple downstream tasks.
We assume that each device $i \in \mathcal{M}$ has a local dataset $\mathcal{D}_i$ for a unique type of downstream task $K_i$, containing $N_i$ data samples with $N=\sum_{i = 1}^{M} N_i$ being the total number of data samples across devices. 
Each data sample $n$ consists of an input feature vector $\boldsymbol{x}_{i,n} \in \mathbb{R}^{N_{\textrm{I}}\times 1}$ and a corresponding label vector $\boldsymbol{y}_{i,n} \in \mathbb{R}^{N_{\textrm{O}}\times 1}$. 
In particular, the local objective function of device $i$ is given by
\begin{equation}\label{eq:LocalLoss}
    F\left(\boldsymbol{W}_0, \boldsymbol{w}_{i,t},\mathcal{D}_{i}\right)
    \!=\!\frac{1}{N_i}\sum\limits_{n=1}^{N_i}\!f\!\left(\phi\left(\boldsymbol{W}_0, \boldsymbol{w}_{i,t}, \boldsymbol{x}_{i,n}\right),\boldsymbol{y}_{i,n}\right),
\end{equation}
where $\boldsymbol{W}_0=\left\{\boldsymbol{w}_{0,1}, \cdots, \boldsymbol{w}_{0,L}\right\}$ denotes the fixed pre-trained LLM parameters set, where $\boldsymbol{w}_{0,l} \in \mathbb{R}^{d_l \times k_l}$ being the parameters of $l$-th pre-trained LLM layer.
Similarly, $\boldsymbol{w}_{i,t}=\left\{\boldsymbol{w}_{i,1,t}, \cdots, \boldsymbol{w}_{i,L,t}\right\}$ is the trainable adapter parameters set of device $i$ at iteration $t$ , where $\boldsymbol{w}_{i,l,t} \in \mathbb{R}^{d_l \times k_l}$ being the parameters of $l$-th fine-tuned LLM layer.
$\phi\left(\boldsymbol{W}_0,\boldsymbol{w}_{i,t}, \boldsymbol{x}_{i,n}\right)$ denotes the output of fine-tuned LLM, and $f\left(\phi\left(\boldsymbol{W}_0, \boldsymbol{w}_{i,t}, \boldsymbol{x}_{i,n}\right),\boldsymbol{y}_{i,n}\right)$ is the loss function that measures the difference between the output $\phi\left(\boldsymbol{W}_0,\boldsymbol{w}_{i,t}, \boldsymbol{x}_{i,n}\right)$ and label $\boldsymbol{y}_{i,n}$. 

\subsection{DFL Local Fine-tuning Process}
\begin{figure}[t]
\centering
\includegraphics[width=7cm]{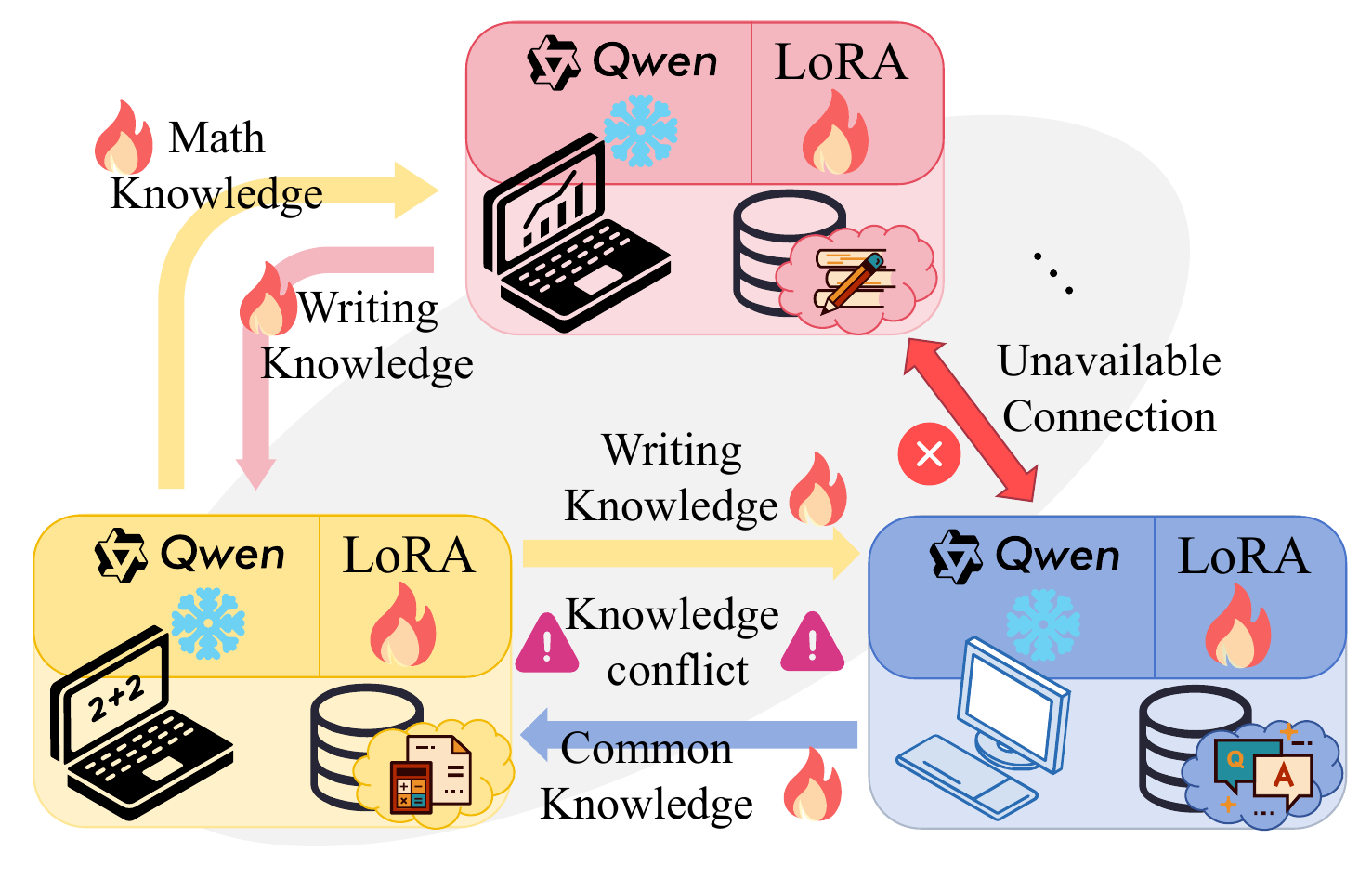}
\vspace{-0.5cm}
\caption{Illustration of the considered multi-task DFL framework.}
\centering
\vspace{-0.4cm}
\label{fig1}
\end{figure}
To reduce the computation overhead caused by full parameter updating, LoRA is employed to decompose adapter $\boldsymbol{w}_{i,l,t} \in \mathbb{R}^{d_l \times k_l}$ into two low-rank matrices $\boldsymbol{B}_{i,l,t}\in \mathbb{R}^{d_l \times r_l}$ and $\boldsymbol{A}_{i,l,t}\in \mathbb{R}^{r_l\times k_l}$ in each device with which has fewer parameters, which is given by
\begin{equation}\label{eq:localLoRA}
    \boldsymbol{w}_{i,l,t}=\boldsymbol{B}_{i,l,t}\boldsymbol{A}_{i,l,t},
\end{equation}
where $\boldsymbol{B}_{i,l,t}$ is called expansion matrix and $\boldsymbol{A}_{i,l,t}$ is called projection matrix, respectively.
Expansion matrix and projection matrix share the same rank $r_{l} \ll \min\left(d_{l},k_{l}\right)$.

To further reduce the computational overhead for local LoRA parameter updating, a sparsity-activation-based method is employed, which activates only a subset of parameters in $\boldsymbol{B}_{i,l,t}$ during local fine-tuning.
The sparsification mask for $\boldsymbol{B}_{i,l,t}$ is $\boldsymbol{M}^{B}_{i,l,t}\in\{0,1\}^{d_l \times r_l}$ where $\left[\boldsymbol{M}^{B}_{i,l,t}\right]_{x,y}=1$ implies the $\{x,y\}$-th parameter in $\boldsymbol{B}_{i,l,t}$ is non-zero and trainable during updating, and $\left[\boldsymbol{M}^{B}_{i,l,t}\right]_{x,y}=0$, otherwise.
Thus, the adapter of device $i$ in (\ref{eq:localLoRA}) can be rewritten as 
\begin{equation}
  \begin{aligned}
    \boldsymbol{w}_{i,l,t}=\left(\boldsymbol{B}_{i,l,t} \odot \boldsymbol{M}^{B}_{i,l,t}\right)\boldsymbol{A}_{i,l,t},
  \end{aligned}\label{mask_LoRA}
\end{equation}
where $\odot$ is the element-wise multiplication function.
Then, we define the sparsity rate of $\boldsymbol{B}_{i,l,t}$, which represents the average activation probability on each parameter position of $\boldsymbol{B}_{i,l,t}$ during fine-tuning as 
\begin{equation}
  \begin{aligned}
    s_{i,l,t}=\frac{||\boldsymbol{M}^{B}_{i,l,t}||}{d_l \times r_l},
  \end{aligned}
\end{equation}
where $||\boldsymbol{M}^{B}_{i,l,t}|| = \sum\limits_{x=1}^{d_l}\sum\limits_{y=1}^{r_l}  \left[\boldsymbol{M}^{B}_{i,l,t}\right]_{x,y}$ is the number of the activated parameter that device $i$ will transmit to its neighbors. 

Generally, in each iteration $t$, each device $i$ performs a local model update based on the local dataset $\mathcal{D}_{i}$ as follows
\begin{align}
    \boldsymbol{A}^{\prime}_{i,l,t} &= \boldsymbol{A}_{i,l,t} - \eta \nabla_{\boldsymbol{A}_{i,l,t}} F\left(\boldsymbol{W}_0, \boldsymbol{w}_{i,t},\mathcal{D}_{i}\right), \label{eq:updateA} \\
    \boldsymbol{B}^{\prime}_{i,l,t} &= \boldsymbol{B}_{i,l,t} - \eta \nabla_{\boldsymbol{B}_{i,l,t}\odot \boldsymbol{M}^{B}_{i,l,t}} F\left(\boldsymbol{W}_0, \boldsymbol{w}_{i,t},\mathcal{D}_{i}\right), \label{eq:updateB}
\end{align}
where $\eta$ is the learning rate, $\nabla_{\boldsymbol{A}_{i,l,t}} F\left(\boldsymbol{W}_0, \boldsymbol{w}_{i,t},\mathcal{D}_{i}\right)$ and $\nabla_{\boldsymbol{B}_{i,l,t}\odot \boldsymbol{M}^{B}_{i,l,t}} F\left(\boldsymbol{W}_0, \boldsymbol{w}_{i,t},\mathcal{D}_{i}\right)$ are the cumulative gradient.

The computation overhead of device $i$ in the local fine-tuning process is given by \cite{AJointLearning}
\begin{equation}\label{computing_amount}
  \begin{aligned}
    c_{i,t}= \frac{\varepsilon Q_{i,t}\left(\boldsymbol{s}_{i,t}\right)N_{i}}{B},
  \end{aligned}
\end{equation}
where $\varepsilon$ donate the computing correlation coefficient related to the device's computing capability, $Q_{i,t}\left(\boldsymbol{s}_{i,t}\right)=\sum_{l=1}^{L}d_l r_l s_{i,l,t}+r_{l}k_{l}$ being the number of parameters that need to be updated. $\boldsymbol{s}_{i,t}=\left[\boldsymbol{s}_{i,1,t},\dots,\boldsymbol{s}_{i,L,t}\right]$ is the sparsity rate matrix.
, and $B\leq B_{\text{max}}$ is the batch size of local fine-tuning and $B_{\text{max}}$ is the maximum batch size.

\subsection{DFL Model Transmission and Aggregation Process}
After local fine-tuning, each device $i$ will transmit the updated LoRA parameters with its neighbors for model aggregation.
Note that due to the limited transmission resources, only a subset of neighboring devices can be selected for model transmission.
We adopt an orthogonal frequency division multiple access (OFDMA) transmission scheme for model transmission.
Let $W$ be the bandwidth that the device can use to transmit model parameter updates $\boldsymbol{w}_{i,t}$ and $p_{i,j,t}$ be the transmit power. 
The data transmission delay of device $i$ transmitting updated model parameters to device $j$ is
\begin{equation}\label{eq:TimeDelay}
  \begin{aligned}
    l_{i,j,t}\!\left(Q_{i,t}\left(\boldsymbol{s}_{i,t}\right),\!\boldsymbol{u}_{i,t},\!\boldsymbol{\varepsilon}_t,\!p_{i,j,t}\right) \!=\! \frac{Q_{i,t}\left(\boldsymbol{s}_{i,t}\right)}{\frac{W}{||\boldsymbol {u}_{i,t}||}{\log}\left(1+{\frac{{p_{i,j,t}{h_{i,j,t}\left(\boldsymbol{\varepsilon}_{t}\right)}}}{{\sigma^2_{\emph{N}}}}}\right)},
  \end{aligned}
\end{equation}
where $\boldsymbol{u}_{i,t} = [u_{i,1,t},\cdots,u_{i,M,t}]$ is a model transmission vector of device $i$ with $u_{i,j,t}=1$ implying that device $i$ will exchange its local model with device $j$ at iteration $t$, and $u_{i,j,t}=0$, otherwise.
$||\boldsymbol{u}_{i,t}|| = \sum_{j \in \mathcal{M}} u_{i,j,t}$ is the number of devices that will exchange the updated parameters to device $i$. 
The subset of neighbors selected by device $i$ with which it transmit its updated parameter is $\mathcal{M}_{i,t}=\left\{j|u_{i,j,t}=1\right\}$.
$h_{i,j,t}=\rho_{i,j,t}d_{i,j,t}^{-2}$ is the channel gain between device $i$ and $j$ with $\rho_{i,j,t}$ being the Rayleigh fading parameters, and $d_{i,j,t}$ is the distance between device $i$ and $j$. The location of each device $i$ at iteration $t$ is captured by a vector $\boldsymbol{\varepsilon}_{i,t} = [\varepsilon_{i,t,1}, \varepsilon_{i,t,2}]$, with $\boldsymbol{\varepsilon}_{t}=\left[\boldsymbol{\varepsilon}_{1,t},...,\boldsymbol{\varepsilon}_{M,t}\right]$ being the location matrix across devices. 
$\sigma^2_{\emph{N}}$ represents the variance of additive white Gaussian noise.

Given the neighbor's update model parameter, in the traditional DFL framework, devices will aggregate the received model into a single model \cite{FedAVG}, which is given by:
\begin{equation}\label{eq:LoRA_AggregationA}
    \begin{aligned}
    \boldsymbol{A}_{i,l,t+1} = \sum_{j \in \mathcal{M}} \alpha_{j} \frac{u_{j,i,t}}{||\boldsymbol {u}_{i,t}||}\boldsymbol{A}^{\prime}_{j,l,t},
    \end{aligned}
\end{equation}
\begin{equation}\label{eq:LoRA_AggregationB}
    \begin{aligned}
    \boldsymbol{B}_{i,l,t+1} = \sum_{j \in \mathcal{M}} \alpha_{j} \frac{u_{j,i,t}}{||\boldsymbol {u}_{i,t}||} \boldsymbol{B}^{\prime}_{j,l,t}, 
    \end{aligned}
\end{equation}
where $\alpha_{j}\in\left[0,1\right]$ is the aggregation weight based on training data volume and device connection density, which satisfy $\sum_{j\in\mathcal{M}}\alpha_{j}=1$.

\subsection{Problem Formulation}
Our goal is to minimize the DFL training loss across various downstream tasks while accounting for transmission delay, communication and computational resource constraints simultaneously.
The optimization problem is formulated as
\begin{equation}\label{eq:max1}
    \mathop {\min }\limits_{\boldsymbol{U}, \boldsymbol{P}, \boldsymbol{S}} \frac{1}{M} \sum\limits_{i=1}^{M}\sum\limits_{j=1}^{M}F\left(\boldsymbol{W}_0, \boldsymbol{w}_{i,t},\mathcal{D}_{j}\right),\\
    \end{equation}
    \begin{align}\label{c1}
    \setlength{\abovedisplayskip}{-20 pt}
    \setlength{\belowdisplayskip}{-20 pt}
    {\rm{s.t.}} \,\, 
    & l_{i,j,t}\left(Q_{i,t}\left(\boldsymbol{s}_{i,t}\right),\boldsymbol{u}_{i,t}, \boldsymbol{\varepsilon}_t, p_{i,j,t}\right) \leqslant \Gamma,\forall i,j \in \mathcal{M},\forall {t} \in \mathcal{T}, \tag{\theequation a} \\
    & \sum_{j\in\mathcal{M}_{i,t}} p_{i,j,t} \leqslant p_{\text{max}}, \forall i \in \mathcal{M}, \forall {t} \in \mathcal{T},\tag{\theequation b} \\
    & c_{i,t} \leqslant c_{i,\text{max}}, \forall i \in \mathcal{M}, \forall {t} \in \mathcal{T},\tag{\theequation c}
\end{align}
where $\boldsymbol{U} = [\boldsymbol{u}_{1},\cdots,\boldsymbol{u}_{T}]^\top$ is the model transmission matrix, $\boldsymbol{P} = [\boldsymbol{p}_{1},\cdots,\boldsymbol{p}_{T}]^\top$ is the transmit power matrix, and $\boldsymbol{S}=[\boldsymbol{s}_{1},\cdots,\boldsymbol{s}_{T}]^\top$ is the sparsity rate matrix where $\boldsymbol{s}_{t}=\left[s_{i,t},\cdots,s_{M,t}\right]$ is the sparsity rate of each layer at iteration $t$. 
$p_{\text{max}}$ is the transmit power constraint. 
$\Gamma$ is the maximum model transmission delay. 
$p_{\text{max}}$ and $c_{i,\text{max}}$ are the maximum transmission power and maximum computing resources, respectively.
(\ref{eq:max1}a) is a constraint on the model transmission delay per iteration, (\ref{eq:max1}b) is the transmit power constraint, and (\ref{eq:max1}c) is the computing resources constraint.

Second, the design of $\boldsymbol{U}$ and sparsity masks is challenging, and their interplay with DFL performance needs to be investigated.
\section{Proposed Methods}
\begin{figure*}[t]
\centering
\includegraphics[width=16cm]{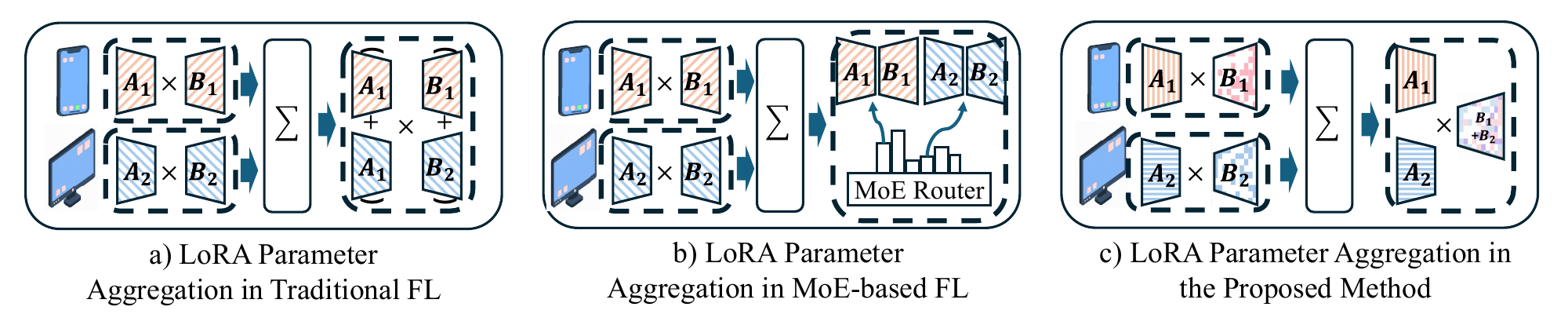}
\caption{Comparison between different LoRA methods during the aggregation process.}
\centering
\end{figure*}
\subsection{Issues of Existing Works}
Based on the modeling above, we identify three primary challenges for collaborative LLM fine-tuning within the DFL life-cycle:
\begin{itemize}
    \item \textbf{Problem 1: Absence of a decentralized mechanism against \textit{catastrophic knowledge forgetting} during fine-tuning process.}
    The existing works introduce orthogonal constraints into the local objective function (\ref{eq:LocalLoss}) to enforce the model updates of different tasks separately in orthogonal subspaces, which is given by
    \begin{equation}
    \label{orthogonallossfunction}
        \begin{aligned}
            F_{\text{orth}}\left(\boldsymbol{W}_0, \boldsymbol{w}_{i,t},\mathcal{D}_{i}\right)
            =&\frac{1}{N_i}\sum\limits_{n=1}^{N_i}\!f\!\left(\phi\left(\boldsymbol{W}_0, \boldsymbol{w}_{i,t}, \boldsymbol{x}_{i,n}\right),\boldsymbol{y}_{i,n}\right) \\&+\sum\limits_{j=1}^{M}\lambda_{i}||\boldsymbol{A}_{i,l,t}\boldsymbol{A}_{i,l,t}^{\top}||^2.
        \end{aligned}
    \end{equation}
    The second term in (\ref{orthogonallossfunction}) is the orthogonal loss function, $\lambda_i$ is the penalty factor. 
    Note that each device needs to collect $\boldsymbol{A}_{i,l,t},\forall i \in \mathcal{M}$ for minimizing $F_{\text{orth}}\left(\boldsymbol{W}_0, \boldsymbol{w}_{i,t},\mathcal{D}_{i}\right)$ contradicts the decentralized setting.
    \item \textbf{Problem 2: Inadequacy of theoretical analysis on the relationship between multi-task performance and device connection scheme design during the aggregation process.}
    The existing DFL works focus on single-task fine-tuning scenario, omitting the analysis of its relationship with multi-task performance.
    \item \textbf{Problem 3: Lack of an computationally inexpensive  method against \textit{multi-task knowledge interference} reduction method during the inference process.}
    For example, consider $\boldsymbol{w}_{i,l,t}$ and $\boldsymbol{w}_{j,l,t}$ are aggregated into a single model based on the traditional method shown in (\ref{eq:LoRA_AggregationA}) and (\ref{eq:LoRA_AggregationB}), which is given by 
    \begin{equation}
      \begin{aligned}
      \boldsymbol{W}^{\prime}_{l,t}&
      \!=\!\boldsymbol{W}_{0,l}\!+\!\left(\alpha_{i}  \boldsymbol{B}^{\prime}_{i,l,t}\!+\!\alpha_{j}  \boldsymbol{B}^{\prime}_{j,l,t}\right)\!\!\left( \alpha_{i} \boldsymbol{A}^{\prime}_{i,l,t}+\alpha_{j} \boldsymbol{A}^{\prime}_{j,l,t}\right).
      \end{aligned}
    \end{equation}
    Let $\boldsymbol{H}_{i,l,\text{input}}$ be the $l$-th layer input for task $i$, derived from dataset $\mathcal{D}_{i}$, the accordingly inference output of the merged model $\boldsymbol{W}^{\prime}_{l,t}$ is given by 
    \begin{equation}
      \begin{aligned}  &\boldsymbol{W}^{\prime}_{l,t}\boldsymbol{H}_{i,l,\text{input}}=\boldsymbol{W}_{0,l}\boldsymbol{H}_{i,l,\text{input}}+\alpha_i^2 \boldsymbol{B}^{\prime}_{i,l,t}\boldsymbol{A}^{\prime}_{i,l,t}\boldsymbol{H}_{i,l,\text{input}}\\
      &\!\!+\!\!\left(\alpha_i\alpha_j \!\!\!\!\sum\limits_{k,l\in\left\{i,j\right\},k\neq l}\!\!\!\!\boldsymbol{B}^{\prime}_{k,l,t}\boldsymbol{A}^{\prime}_{l,l,t}+\alpha_j^2\boldsymbol{B}^{\prime}_{j,l,t}\boldsymbol{A}^{\prime}_{j,l,t}\right)\boldsymbol{H}_{i,l,\text{input}},
      \end{aligned}\label{eq:example}
    \end{equation}
    where the second term is the desired output from the fine-tuned adapter $\boldsymbol{w}_{i,l,t}$.
    The third term of (\ref{eq:example}) can be viewed as a "interference" introduced by the knowledge from other adapters, which needs to be minimized for enhancing LLM performance.
    To achieve this, the existing MoE-based approaches employ a MoE router to determine gating weights for experts (e.g., adapters) during inference process based on their knowledge. 
    The output of the MoE-based model can be given by
    \begin{equation}
      \begin{aligned}         \boldsymbol{W}^{\prime}_{l,\text{MoE}}\boldsymbol{H}_{i,l,\text{input}}\!=\left(\!\boldsymbol{W}_{0,l}\!+\!\sum\limits_{k=1}^{M}\beta_k \boldsymbol{B}^{\prime}_{k,l,t}\boldsymbol{A}^{\prime}_{k,l,t}\right)\boldsymbol{H}_{i,l,\text{input}},
      \end{aligned}
    \end{equation}
    where $\beta_k$ is the gating weight.
    The MoE-based approach introduces a router, incurring additional training overhead that scales with the number of devices and limits its applicability in large-scale deployments.
\end{itemize}
\subsection{Static-and-Orthogonal Projection Matrix and Layer-Wise Sparsity Expansion Matrix Design to Solve Problem 1}
Inspired by \cite{OrthogonalGradientDescentforContinualLearning}, keeping the projection matrix $\boldsymbol{A}_{i,l,t}$ orthogonality between different tasks (i.e., $\sum_{}^{}||\boldsymbol{A}_{i,l,t}\boldsymbol{A}_{i,l,t}^{\top}||^2\approx 0, \forall i,j \in \mathcal{M}, \forall l\in\left[1, L\right]$) can effectively avoid knowledge forgetting during parameter aggregation process.
However, the orthogonality between $\boldsymbol{A}_{i,l,t}$ will be compromised during each local model update, and hence the updated projection matrix from all other devices needs to be collected in real-time to ensure orthogonality, which compromises the communication resources and leads to unstable fine-tuning.

To satisfy the orthogonality between the projection matrix $\boldsymbol{A}_{i,l,t}$ in the DFL scenario without relying on collecting the latest updated parameters from all devices, we regard $\boldsymbol{A}_{i,l,t}$ as a static matrix with entries drawn from a zero-mean, unit-variance Gaussian distribution independently, inspired by LoRI \cite{LoRI}.
In this way, $\boldsymbol{A}_{i,l,t}, \forall i \in \mathcal{M}$ are mutually orthogonal during the fine-tuning process, which is given by [Theorem 3.4, \cite{FlyLoRA}]
\begin{equation}
  \begin{aligned}\label{eq:orgA}
    \sum||\boldsymbol{A}_{i,l,t}^{\top}\boldsymbol{A}_{j,l,t}||^{2}\approx\boldsymbol{0}_{r \times r}, \left[\boldsymbol{A}_{i,l,t}\right]_{x,y} \sim \mathcal{N}\left(0, 1\right), \forall i,j\in\mathcal{M}.
  \end{aligned}
\end{equation}

Since $\boldsymbol{A}_{i,l,t}, \forall i\in \mathcal{M}$ are frozen during fine-tuning and aggregation, we assume they are stored separately, which is different from the standard aggregation step (\ref{eq:LoRA_AggregationA}) in traditional DFL. 
In the proposed framework, each device $i$ transmit its local projection matrix $\boldsymbol{A}_{i,l,t}$ with its neighbors only at the first iteration of parameter exchange, and separately save them into the projection matrix set $\left\{\boldsymbol{A}_{i,l,t}|i\in \mathcal{M}, l\in \left[1,L\right]\right\}$.
Further, we abbreviated static $\boldsymbol{A}_{i,l,t}$ as $\boldsymbol{A}_{i,l}$ in the following section. 
The number of parameters that need to be updated can also be rewritten as 
\begin{align}
Q_{i,t}\left(\boldsymbol{s}_{i,t}\right)=\sum_{l=1}^{L}d_l r_l s_{i,l,t}.
\end{align}
Here the transmission overhead of $\boldsymbol{A}_{i,l}$ is ignored since it only requires to be transmitted once between devices.
Note that keeping $\boldsymbol{A}_{i,l}$ static does not compromise the fine-tuning accuracy compared to traditional DFL where $\boldsymbol{A}_{i,l}$ is continuously updated, as $\boldsymbol{B}_{i,l,t}\boldsymbol{A}_{i,l}$ preserves the same subspace dimensionality.
Furthermore, $\boldsymbol{A}_{i,l}$ only needs to be transmitted once across different pairs of devices, which can also reduce the transmission overhead compared to the traditional approaches where $\boldsymbol{A}_{i,l}$ is transmitted at each iteration.
However, parameter interference during the aggregation of expansion matrix $\boldsymbol{B}_{i,l,t}$ persists even with the implementation of the orthogonal projection matrix.

To further avoid parameter interference between the expansion matrix $\boldsymbol{B}_{i,l,t}$ during aggregation as well as decreasing communication overhead, we propose a sparse-activation mechanism as shown in (\ref{mask_LoRA}). 
This mechanism selectively activates only a subset of parameters in $\boldsymbol{B}_{i,l,t}$ during local fine-tuning, independently based on their communication and computing resources through sparsification mask for $\boldsymbol{M}^{B}_{i,l,t}\in\{0,1\}^{r_l \times k_l}$.
\begin{figure}[t]
\centering
\includegraphics[width=7.5cm]{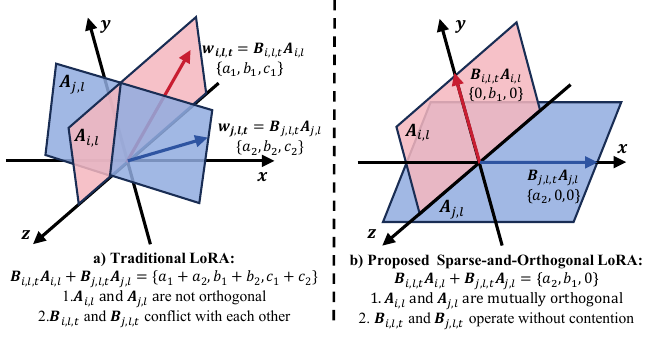}
\caption{Comparison of update conflicts in conventional LoRA and the proposed sparse-and-orthogonal LoRA.}
\centering
\end{figure}
Specifically, to preserve essential model performance, we activate the parameters with the highest $s_{i,l,t}\%$ value within $\boldsymbol{B}_{i,l,t}$ during the initial fine-tuning mini-batch, which can be given by:
\begin{equation}
  \begin{aligned}
    \left[\boldsymbol{M}_{i,l,t}^{B}\right]_{x,y}=\mathbbm{1}\left(\left[\boldsymbol{B}_{i,l,t}\right]_{x,y} \geq \tau_{s_{i,l,t}}\right),
  \end{aligned}
\end{equation}
where $\mathbbm{1}_{\left\{x\right\}}=1$ if condition $x$ is true, $\mathbbm{1}_{\left\{x\right\}}=0$, otherwise.
$\tau_{s_{i,l,t}}$ is the threshold corresponding to the top $s_{i,l,t}\%$ value in $\boldsymbol{B}_{i,l,t}$.

Since the projection matrix $\boldsymbol{A}_{i,l}$ are randomly generated on each device and $\boldsymbol{B}_{i,l,t}$ are activated based on their fine-tuned expansion matrix, we assume that at each parameter position, $\boldsymbol{B}_{i,l,t}$ has a fully independent activation probability.
Then, we can define the parameter collision rate during aggregating set of expansion matrix $\{\boldsymbol{B}_{i,l,t} \odot \boldsymbol{M}_{i,l,t}^{B}|i\in \mathcal{M}_{i,t}\}$ which measure the parameter interference during model aggregation as
  \begin{align}
    S_{\mathcal{M}_{i,t},l}=1& \!-\!\!\!\prod\limits_{j\in\mathcal{M}_{i,t}}\!\!\!\left(1\!-\!s_{j,l,t}\right) \!-\!\!\sum\limits_{j\in\mathcal{M}_{i,t}}\!\!s_{j,l,t}\!\!\!\!\prod\limits_{z\in\mathcal{M}_{i,t},z\neq j}\!\!\!\left(1-s_{z,l,t}\right),
  \end{align}
where $S_{\mathcal{M}_{i,t},l}\in\left[0,1\right]$.
We can see that the parameter collision rate $S_{\mathcal{M}_{i,t},l}$ depends on both the devices in the aggregation set $\mathcal{M}_{i,t}$ and their activation probability $s_{i,l,t}$, respectively.
To minimize the parameter interference rate $S_{\mathcal{M}_{i,t},l}$ of devices in $\mathcal{M}_{i,t}$, we need to design the device set and their activation probabilities $s_{i,l,t}$, respectively.

Since the computing and communication resources of each device are limited, the maximum number of parameters that need to be updated and transmitted is restricted as shown in (\ref{eq:max1}c).
At the same time, heterogeneous datasets exhibit diversity across multiple dimensions, leading to varied representational requirements across different layers \cite{THANORA}.
This motivates the design of a task-specific sparsity rate allocation method across layers, which can better match the representational demands of each task and reduce overall parameter collisions across devices in $\mathcal{M}_{i,t}$.
To measure the heterogeneous task's representational demands across layers for sparsity rates allocation, we need to measure these discrepancies.
Let $\boldsymbol{H}^{0}_{i,l,\text{input}}$ be the average input of pre-trained model $\boldsymbol{W}_0$ at $l$-th layer for a small batch of data from $\mathcal{D}_i$,  which is stable and independent of the fine-tuning process.
Then, we introduce the covariance $\boldsymbol{C}_{i,l} = \boldsymbol{H}^{0}_{i,l,\text{input}}\left(\boldsymbol{H}^{0}_{i,l,\text{input}}\right)^{\top}$ and derive a SVD as follows
  \begin{align}
    \widehat{\boldsymbol{W}}_{i,l}=\operatorname{SVD}\left(\boldsymbol{W}_{0,l}\boldsymbol{C}_{i,l}\right) \boldsymbol{C}_{i,l}^{-1}=\sum_{i=1}^{R_{i,l}} \sigma_{i,l} \mathbf{u}_{i,l} {\mathbf{v}}_{i,l}^{\top},
\end{align}
where $\sigma_{i,l}$ is the singular value, $\mathbf{u}_{i,l}$ and ${\mathbf{v}}_{i,l}$ represent corresponding vectors, and 
$R_{i,l}=\text{Rank}\left(\widehat{\boldsymbol{W}}_{i,l}\right)$ is the rank dimension of $\widehat{\boldsymbol{W}}_{i,l}$.
Since the singular value can measure the complexity of the representation required by task in the $l$-th layer, we can compute the spectral entropy of the singular values $\sigma_{i,l}$ for task $i$ in the $l$-th layer as 
\begin{align}
\mathcal{H}\left(\boldsymbol{\sigma}_{i,l}\right)=-\sum_{j=1}^{R} p_{j,l} \log \left(p_{j,l}\right),
\end{align}
where $\boldsymbol{\sigma}_{i,l}=\left[\sigma_{i,1}, \cdots, \sigma_{i,R_{i,l}}\right]$ and $p_{j,l}=\frac{\sigma_{j,l}}{\sum_{z=1}^{R_{i,l}} \sigma_{z,l}}$ is the normalized probability.
Under this fixed total number of trainable parameter budget $Q_{i,\text{budget}}=\frac{c_{i,\text{max}} B}{\varepsilon N_{i}}$ which is restricted by the communication (\ref{eq:max1}a) and computation resources (\ref{eq:max1}c), the sparsity rate of task $i$ in layer $l$ can be allocated by 
\begin{equation}\label{sparity_rate}
  \begin{aligned}
    s_{i,l,t}=\frac{\text{max}[Q_{i,\text{budget}}\times \left[\text{Softmax}\left(\mathcal{H}\left(\boldsymbol{\sigma}_{}\right)\right)\right]_{l},d_l \times r_l]}{d_l \times r_l},
  \end{aligned}
\end{equation}
where $\left[\text{Softmax}\left(\mathcal{H}\left(\boldsymbol{\sigma}_{}\right)\right)\right]_{l}$ is the normalized distribution for $l$-th layer based on Softmax function.
By introducing a fixed total dimension budget and layer-wise importance, we employ a task-aware sparsity rate design method that can accommodate heterogeneous requirements under resource constraints.

\subsection{Cluster-based Device Connection Topology Design based on DFL Convergence Analysis to Solve Problem 2}
To simplify the device connection $\boldsymbol{U}$ optimization in (\ref{eq:max1}), we must analyze the impact of $\boldsymbol{U}$ on the DFL model convergence under the proposed orthogonal $\boldsymbol{A}_{i,l}$, sparse activation $\boldsymbol{B}_{i,l,t}$, and refinement design using task-aware vector first.
Since the each projection matrix $\boldsymbol{A}_{i,l}$ is independently sampled by Gaussian distribution, and each expansion matrix $\boldsymbol{B}_{i,l,t}$ is updated based on independent dataset and sparse mask, each adapter $\boldsymbol{w}_{i,l,t}$ and the masks $\boldsymbol{M}_{i,l,t}$ are independent with each other, we can make the following assumptions:
\begin{itemize}
    \item {\it{Assumption$~1$}:}
    For device $i,j \in \mathcal{M}$, we assume $E||\nabla_{\boldsymbol{B}_{i,l,t}}F\left(\boldsymbol{W}_0, \boldsymbol{w}_{i,t},\mathcal{D}_{i}\right)\boldsymbol{A}_{i,l}||\leq G$.
\item {\it{Assumption$~2$}:}
    For each device $i \in \mathcal{M}$, we assume $E|| \frac{1}{|\mathcal{M}_{i,t}|} \sum\limits_{j\in\mathcal{M}_{i,t}}\boldsymbol{B}_{j,l,t}\boldsymbol{A}_{i,l}||\leq P$.
\end{itemize}
These assumptions are natural, where Assumption~$1$ stems from the fact that the upper bound of the local gradient exists, and Assumption~$2$ assumes an upper bound of the product between Gauss-sampled $\boldsymbol{A}_{i,l}$ and independent updated $\boldsymbol{B}_{j,l,t}$ exists.
Then, we analyze the impact of $\boldsymbol{U}$ on the upper bound of the gap between the local fine-tuned adapter and the aggregated global adapter, which can be given by
\begin{theorem}\label{thm:theorem1}
Given the model transmission matrix $\boldsymbol{U}$, an upper bound of the gap between each local model $\left(\boldsymbol{B}_{i,l,t}\odot\boldsymbol{M}_{i,l,t}\right)\boldsymbol{A}_{i,l,\text{refined}}$ of device $i$ and the average model $\left(\frac{1}{|\mathcal{M}_{i,t}|}\sum_{j\in \mathcal{M}_{i,t}}\boldsymbol{B}_{j,l,t}\odot\boldsymbol{M}_{j,l,t}\right)\boldsymbol{A}_{i,l,\text{refined}}$ of all neighboring devices in $\mathcal{M}_{i,t}=\left\{j|u_{i,j,t}=1,\forall j \in \mathcal{M}\right\}$ can be given by
\begin{equation}
    \begin{aligned}
        & \sum\limits_{i\in\mathcal{M}_{i,t}}\!\!\!\!E||\left(\boldsymbol{B}_{i,l,t}\odot\boldsymbol{M}_{i,l,t}\right)\boldsymbol{A}_{i,l,\text{refined}}-\\
        & \left(\frac{1}{|\mathcal{M}_{i,t}|}\sum_{j\in \mathcal{M}_{i,t}}\boldsymbol{B}_{j,l,t}\odot\boldsymbol{M}_{j,l,t}\right)\boldsymbol{A}_{i,l,\text{refined}}||^{2} \\
        &\leq 2rk\sum\limits_{i=1}^{M} \sum\limits_{j=1}^{M}S_{\mathcal{M}_{i,t},l}\left(G+P\right).
    \end{aligned}
\end{equation}
\end{theorem}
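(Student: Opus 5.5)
We bound the consensus gap entrywise on the sparse supports: the only positions where the local masked matrix can differ from the neighbourhood average in a way that does not vanish are those where at least two masks in $\mathcal{M}_{i,t}$ are active. By definition of $S_{\mathcal{M}_{i,t},l}$, this collision event has probability exactly $S_{\mathcal{M}_{i,t},l}$ under the independent-activation assumption. Throughout we use the orthogonality (\ref{eq:orgA}) of the static Gaussian projections, so that contributions routed through $\boldsymbol{A}_{i,l,\text{refined}}$ from different devices do not cross-couple.

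\textbf{Step 1: triangle split.} Write $\boldsymbol{D}_{i}=(\boldsymbol{B}_{i,l,t}\odot\boldsymbol{M}_{i,l,t})\boldsymbol{A}_{i,l,\text{refined}}-\bar{\boldsymbol{B}}_{\mathcal{M}_{i,t}}\boldsymbol{A}_{i,l,\text{refined}}$, where $\bar{\boldsymbol{B}}_{\mathcal{M}_{i,t}}$ denotes the masked neighbourhood average. Applying $\|a-b\|^2\le 2\|a\|^2+2\|b\|^2$ produces the factor $2$. It then suffices to bound a local term and an averaged term separately.

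\textbf{Step 2: decomposition over entries of $\boldsymbol{B}$.} Expand $\boldsymbol{B}\odot\boldsymbol{M}=\sum_{x,y}[\boldsymbol{B}]_{x,y}[\boldsymbol{M}]_{x,y}\boldsymbol{e}_x\boldsymbol{e}_y^{\top}$ over the $r\times k$ index grid. This is where the factor $rk$ comes from: a Cauchy--Schwarz sum over at most $rk$ positions.

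At each position $(x,y)$, consider three cases. If no mask in $\mathcal{M}_{i,t}$ is active, both terms vanish. If exactly one mask is active, the parameter is copied unchanged and, by the orthogonality (\ref{eq:orgA}), produces no interference. We absorb this case by treating the non-collided contribution as matched in the averaged representation. This is the step that relies on the paper's modelling convention rather than on exact algebra. Only collided positions remain, and each occurs with probability $S_{\mathcal{M}_{i,t},l}$. By independence of the masks from $\boldsymbol{B}$ and $\boldsymbol{A}$, the expectation factorizes, giving for each position the probability $S_{\mathcal{M}_{i,t},l}$ times the conditional second moment.

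\textbf{Step 3: moment bounds.} Bound the local conditional moment by Assumption~1. Viewing $\boldsymbol{B}_{i,l,t}$ as an accumulation of gradient steps, $E\|\nabla_{\boldsymbol{B}}F\,\boldsymbol{A}_{i,l}\|\le G$ supplies the constant $G$ after absorbing $\eta$ and normalization. Bound the averaged conditional moment by Assumption~2, giving $P$. Summing over $i$ and over the neighbours $j$, using the crude inclusion $\sum_{j\in\mathcal{M}_{i,t}}\le\sum_{j=1}^M$, yields
\[
2rk\sum_{i=1}^M\sum_{j=1}^M S_{\mathcal{M}_{i,t},l}(G+P).
\]

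\textbf{Main obstacle.} The assumptions bound first moments or norms, whereas the target is a squared norm. Reconciling these, either by reading the assumptions as second-moment bounds or by a boundedness/normalization argument, is the delicate point. The same applies to justifying that non-collided positions contribute nothing, which needs orthogonality of the refined projections across devices rather than exact equality. I would first try to state both reductions explicitly as consequences of (\ref{eq:orgA}) and the independence assumption, then carry out the case analysis.
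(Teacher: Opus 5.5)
\textbf{Genuine gap in Step~1.} Your Step~1 is where the argument breaks. You bound $\|a-b\|^2\le 2\|a\|^2+2\|b\|^2$ with $a$ the local term and $b$ the neighbourhood average as two standalone norms. That discards exactly the cancellation Step~2 relies on. The case analysis ``no mask active / one mask active / collision'' says something only about the \emph{difference}. Each separated norm is nonzero at every position where its own mask is active, collided or not.

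A quick check shows the step cannot work. If $S_{\mathcal{M}_{i,t},l}=0$ (e.g.\ $\mathcal{M}_{i,t}=\{i\}$), both the target right-hand side and the left-hand side are $0$. Yet each of your separated terms equals $E\|(\boldsymbol{B}_{i,l,t}\odot\boldsymbol{M}_{i,l,t})\boldsymbol{A}_{i,l,\text{refined}}\|^2>0$, so no bound proportional to $S_{\mathcal{M}_{i,t},l}$ can hold for them.

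The split also breaks your use of Assumption~1. That assumption controls a gradient, not $\boldsymbol{B}_{i,l,t}$ itself. Viewing $\boldsymbol{B}_{i,l,t}$ as an accumulation of steps gives a bound that grows with $t$ (plus the initialization), not $G$.

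The cross-device orthogonality of the Gaussian projections also does not help here. The local and averaged terms are multiplied by the \emph{same} $\boldsymbol{A}_{i,l,\text{refined}}$, so orthogonality cannot make a single-owner entry ``produce no interference''. The paper uses only independence of $\boldsymbol{A}_{i,l}$ from the masked $\boldsymbol{B}$-differences.

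\textbf{How the paper avoids this.} The paper adds and subtracts the intermediate matrix $\bigl(\frac{1}{|\mathcal{M}_{i,t}|}\sum_{j\in\mathcal{M}_{i,t}}\boldsymbol{B}_{j,l,t}\odot\boldsymbol{M}_{i,t}\bigr)\boldsymbol{A}_{i,l}$, i.e.\ the neighbour average seen through device $i$'s own mask. Both pieces of the factor-$2$ split then remain differences:
\begin{itemize}
\item $\frac{1}{|\mathcal{M}_{i,t}|}\sum_{j}(\boldsymbol{B}_{i,l,t}-\boldsymbol{B}_{j,l,t})\odot\boldsymbol{M}_{i,t}$, a value mismatch on a common mask;
\item $\frac{1}{|\mathcal{M}_{i,t}|}\sum_{j}\boldsymbol{B}_{j,l,t}\odot(\boldsymbol{M}_{i,t}-\boldsymbol{M}_{j,t})$, a mask mismatch on common values.
\end{itemize}
For the first piece, the paper splits $\boldsymbol{M}_{i,t}$ according to agreement with $\boldsymbol{M}_{j,t}$ (its $\boldsymbol{M}^{c}_{i,j,t}$) and invokes the collision-rate definition to extract $rk\,S_{\mathcal{M}_{i,t},l}$. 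It then identifies the remaining $\boldsymbol{B}$-difference with a single local step $-\eta\nabla_{\boldsymbol{B}_{i,l,t}}F_i$ away from the shared aggregate, which is what yields $G$. The second piece is supported only where masks disagree, giving $S_{\mathcal{M}_{i,t},l}$ times the Assumption~2 constant $P$.

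That extraction is itself heuristic in the paper. Like you, it also passes silently between first moments and squared norms. But it is available only because the difference structure is preserved. If you insert this intermediate term, your Steps~2--3 apply to each piece at the same level of rigour as the paper's argument.
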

\begin{proof}
    See Appendix A.
\end{proof}
From Theorem \ref{thm:theorem1}, we can see that the gap between each local adapter of device $i$ and the aggregated adapters of all devices in $\mathcal{M}_{i,t}$ is affected by $S_{\mathcal{M}_{i,t},l}$, which depends on the device connection topology $\boldsymbol{U}$.
From Theorem \ref{thm:theorem1}, we can also observe that this gap decreases as the parameter collision rate $S_{\mathcal{M}_{i,t},l}$ decreases, which implies that the device connection topology design that can satisfy the parameter collision-free condition between $\boldsymbol{B}_{i,l,t}$ can effectively enhance the DFL performance.
This is due to the fact that each device updates $\boldsymbol{B}_{i,l,t}\odot\boldsymbol{M}_{i,l,t}$ with activation overlap simultaneously, which will cause a multi-party parameter interference, leading to inconsistencies in the optimization directions, thus reducing the DFL performance.
Furthermore, since the parameter collision rate $S_{\mathcal{M}_{i,t},l}$ grows with the number of devices in $\mathcal{M}_{i,t}$, it is essential to perform aggregation separately.

To enhance DFL performance by reducing $S_{\mathcal{M}_{i,t},l}$, we propose a cluster-based method where devices are grouped into clusters according to their resources and $s_{i,l,t}$, and the aggregation of $\boldsymbol{B}_{i,l,t}$ is performed separately within each cluster (named inner-cluster aggregation), thereby guaranteeing $S_{\mathcal{M}_{i,t},l}$ does not exceed the predefined threshold $S_{\max}$.
After iterations of inner-cluster aggregation, inter-cluster parameter exchange is performed, where parameters exchanged without further aggregation and thus fundamentally expanding the scope of knowledge while avoiding parameter collisions between clusters.
In particular, we employ an AGNES-based cluster algorithm considering devices' computing, communication resources, and parameter collision rate constraints, where each device is initially considered as a single-element cluster (leaf).
At each step of the cluster algorithm, neighbor clusters exchange their communication resources and sparsity rate with each other and find clusters with the lowest parameter collision rate, as well as satisfy the transmission power constraint in (\ref{eq:max1}b), and combine into a new, bigger cluster.
This procedure is iterated until all devices are members of a single, large cluster, or until the predefined parameter collision rate $S_{\max}$ is reached.
The parameter aggregation and exchange in the proposed cluster-based algorithm can be summarized as follows:
\begin{enumerate}
    \item Devices aggregate into different clusters $\mathcal{M}_{i,t} \subseteq \mathcal{C}$ based on cluster algroithm.
    \item Each device $i$ perform local fine-tuning and exchange parameter $\boldsymbol{A}_{i,l}$ and $\boldsymbol{B}_{i,l,t}$ with other devices available in cluster for aggregation.
    \item Each device in cluster $\mathcal{M}_{i,t}$ store the received $\boldsymbol{A}_{i,l}$ and aggregate the received $\boldsymbol{B}_{i,l,t}$ based on (\ref{eq:LoRA_AggregationB}).
    \item After iterations of updating, clusters exchange the stored $\boldsymbol{A}_{i,l}$ and $\boldsymbol{B}_{i,l,t}$ with each other for knowledge sharing.
\end{enumerate}
Steps 2)-4) are performed until the DFL model convergence. 

Hence, each device keeps a group of adapters generated by different clusters $\mathcal{C}=\{\mathcal{M}_{i,t}\}$, the adapter used for the specific input $\boldsymbol{H}_{i,l}$ can be given by 
\begin{equation}
\begin{aligned}
     \boldsymbol{w}_{i,l,t}\boldsymbol{H}^{0}_{i,l,\text{input}}=&\sum\limits_{\mathcal{M}_{i,t}\subseteq\mathcal{C}}\sum\limits_{j\in\mathcal{M}_{i,t}} \left(\frac{1}{|\mathcal{M}_{i,t}|}\sum\limits_{j\in\mathcal{M}_{i,t}} \boldsymbol{B}_{j,l,t}\odot \boldsymbol{M}_{j,t}\right)\\
    & \boldsymbol{A}_{j,l,\text{refined}}\boldsymbol{H}^{0}_{i,l,\text{input}}.
\end{aligned}
\end{equation}
After obtaining the device connection scheme $\boldsymbol{U}$, the optimal transmit power can be given by [Lemma 1, \cite{mypaper}] as follows:
\begin{lemma}
{\rm The optimal transmit power $p_{i,j,t}$ of device $i$ for transmitting its FL model to device $j$ is}
  \begin{equation}\label{eq:L1}
    \begin{split}
      p_{i,j,t}^\ast  = \frac{u_{i,j,t}\sigma_{N}^2}{h_{i,j,t}\left(\boldsymbol{\varepsilon}_{t}\right)}\left(2^\frac{Q_{i,t}\left(\boldsymbol{s}_{i,t}\right) ||\boldsymbol{u}_i||}{W \Gamma} -1\right).
    \end{split}
  \end{equation}
\end{lemma}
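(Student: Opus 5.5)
The statement is the optimal-power Lemma imported from [Lemma 1, \cite{mypaper}]. The plan is to derive it directly from problem (\ref{eq:max1}) for a fixed connection scheme $\boldsymbol{U}$ and fixed sparsity rates $\boldsymbol{s}_{i,t}$.

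With $\boldsymbol{U}$ and $\boldsymbol{S}$ fixed, the objective in (\ref{eq:max1}) no longer depends on $\boldsymbol{P}$. The power variables therefore enter only through the constraints: the delay constraint (\ref{eq:max1}a) and the power budget (\ref{eq:max1}b). The natural choice is the minimum feasible power on each active link. Minimizing power leaves the most slack in (\ref{eq:max1}b) and hence enlarges the set of admissible topologies. For inactive links, $u_{i,j,t}=0$, no transmission occurs, so $p_{i,j,t}^\ast=0$.

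For an active link, observe that the delay $l_{i,j,t}$ in (\ref{eq:TimeDelay}) is strictly decreasing in $p_{i,j,t}$. The denominator is proportional to $\log(1+p_{i,j,t}h_{i,j,t}/\sigma_N^2)$, which is strictly increasing in $p_{i,j,t}$. Hence the least power meeting (\ref{eq:max1}a) is the one that makes the constraint tight:
\begin{equation*}
\frac{W}{\|\boldsymbol{u}_{i,t}\|}\log_2\!\left(1+\frac{p_{i,j,t}h_{i,j,t}(\boldsymbol{\varepsilon}_t)}{\sigma_N^2}\right)=\frac{Q_{i,t}(\boldsymbol{s}_{i,t})}{\Gamma}.
\end{equation*}
Solving gives $p_{i,j,t}=\frac{\sigma_N^2}{h_{i,j,t}}\big(2^{Q_{i,t}\|\boldsymbol{u}_{i,t}\|/(W\Gamma)}-1\big)$. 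Multiplying by $u_{i,j,t}$ combines the active and inactive cases into the single expression (\ref{eq:L1}).

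Finally, I would note that this $p^\ast$ is optimal among all feasible power allocations. Any feasible vector must dominate $p^\ast$ componentwise, by the monotonicity above. So if $p^\ast$ violates (\ref{eq:max1}b), then $\boldsymbol{U}$ is itself infeasible, which is exactly the check the clustering algorithm performs.

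The main obstacle is interpretive rather than computational. The logarithm in (\ref{eq:TimeDelay}) must be read as base 2 to match the $2^{(\cdot)}$ in (\ref{eq:L1}). The justification for calling the minimum power "optimal" also has to be stated explicitly, namely that the objective is independent of $\boldsymbol{P}$ once $\boldsymbol{U}$ and $\boldsymbol{S}$ are fixed. I would state both points at the outset.
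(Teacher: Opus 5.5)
Your derivation is correct. The paper gives no proof of its own and simply imports the result from [Lemma 1, \cite{mypaper}]; your argument is the standard derivation that yields (\ref{eq:L1}): fix $\boldsymbol{U}$ and $\boldsymbol{S}$ so that $\boldsymbol{P}$ enters only through (\ref{eq:max1}a)--(\ref{eq:max1}b), use the strict monotonicity of the delay (\ref{eq:TimeDelay}) in $p_{i,j,t}$ to make (\ref{eq:max1}a) tight, and absorb inactive links through the factor $u_{i,j,t}$. Your two stated caveats are the points the paper leaves implicit: the logarithm must be read as $\log_2$, and ``optimal'' means the componentwise-minimal feasible power, which satisfies (\ref{eq:max1}b) if and only if some feasible power allocation exists.
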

\subsection{Task-Aware Coding and Implicit MoE Design to Solve Problem 3}
To address Problem 3 without introducing additional computational overhead from the MoE router, we propose an implicit MoE mechanism that combines task-aware coding with a static, orthogonal-based projection matrix design. 
To minimize the interference term in (\ref{eq:example}), we refined static and orthogonal-based projection matrix $\boldsymbol{A}_{i,l}$ in (\ref{eq:orgA}) with $\boldsymbol{H}^{0}_{i,l,\text{input}}$ which is given by
\begin{equation}
\begin{aligned}
&\boldsymbol{A}_{i,l,\text{refined}} = \boldsymbol{A}_{i,l}\frac{\boldsymbol{H}^{0}_{i,l,\text{input}}}{||\boldsymbol{H}^{0}_{i,l,\text{input}}||},
\end{aligned}
\end{equation}
where $\frac{\boldsymbol{H}^{0}_{i,l,\text{input}}}{||\boldsymbol{H}^{0}_{i,l,\text{input}}||}$ is the normalized latent vector.
By incorporating normalized $\boldsymbol{H}^{0}_{i,l,\text{input}}$ of task $i$ into the projection matrix $\boldsymbol{A}_{i,l}$, the interference term in (\ref{eq:example}) can be given by 
\begin{equation}
    \begin{aligned}\label{eq:interference_term}
        &\left(\boldsymbol{B}_{i,l,t}\odot\boldsymbol{M}^{B}_{j,l,t}\right)\boldsymbol{A}_{i,l,\text{refined}}\boldsymbol{H}_{l,\text{input}} \\
        =&\left(\boldsymbol{B}_{i,l,t}\odot\boldsymbol{M}^{B}_{j,l,t}\right)\boldsymbol{A}_{i,l}\frac{\boldsymbol{H}^{0}_{i,l,\text{input}}\boldsymbol{H}_{l,\text{input}}}{||\boldsymbol{H}^{0}_{i,l,\text{input}}||}.
    \end{aligned}
\end{equation}
The interference term (\ref{eq:interference_term}) scaled down by $\frac{\boldsymbol{H}^{0}_{i,l,\text{input}}\boldsymbol{H}_{l,\text{input}}}{||\boldsymbol{H}^{0}_{i,l,\text{input}}||}$ when  $\boldsymbol{H}_{l,\text{input}}$ has completely different representation with $\boldsymbol{H}^{0}_{i,l,\text{input}}$.
On the other hand, when $\boldsymbol{H}^{0}_{i,l,\text{input}}$ and $\boldsymbol{H}_{l,\text{input}}$ have a highly similar representation, it implies the adapter fine-tuned based on task $i$ and $\boldsymbol{H}_{l,\text{input}}$ equipped with similar knowledge, which can also be scaled up simultaneously.
Hence, the proposed method that refines the projection matrix $\boldsymbol{A}_{i,l}$ with task-aware input vector $\boldsymbol{H}^{0}_{i,l,\text{input}}$ based on the unified pre-trained model $\boldsymbol{W}_{0}$ can re-scale the interference from interference from other unrelated tasks, but also promotes the knowledge integration.
It can also be proved that the proposed orthogonality, sparsity activation, and refinement method does not change the orthogonality between $\boldsymbol{A}_{i,l,\text{refined}}$ as follows:
\begin{theorem}\label{thm:theorem2}(Subspace Orthogonality under Sparsity Activation and Task-Aware Refinement)
Let $\boldsymbol{w}_{i,l,t}=\left(\boldsymbol{B}_{i,l,t}\odot \boldsymbol{M}_{i,l,t}^{B}\right)\boldsymbol{A}_{i,l,\text{refined}}$ and $\boldsymbol{w}_{j,l,t}=\left(\boldsymbol{B}_{j,l,t}\odot \boldsymbol{M}_{j,l,t}^{B}\right)\boldsymbol{A}_{j,l,\text{refined}}$.
The expectation of their product reveals:
\begin{equation}
\begin{aligned}
    \left<\boldsymbol{w}_{i,l,t},\boldsymbol{w}_{j,l,t}^{\top}\right>_F=0.
\end{aligned}
\end{equation}
\end{theorem}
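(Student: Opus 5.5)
The plan is to read the statement as a claim about expectations over the independent Gaussian draws of the projection matrices, $\mathbb{E}\left<\boldsymbol{w}_{i,l,t},\boldsymbol{w}_{j,l,t}\right>_F=0$ for $i\neq j$. The randomness we use is only that of $\boldsymbol{A}_{i,l}$ and $\boldsymbol{A}_{j,l}$. The task vectors $\boldsymbol{H}^{0}_{i,l,\text{input}}$ are treated as deterministic: they come from the frozen $\boldsymbol{W}_0$. The expansion matrices and masks are treated as independent across devices, as in the discussion preceding Assumption~1. To fix dimensions, we interpret the refinement $\boldsymbol{A}_{i,l}\frac{\boldsymbol{H}^{0}_{i,l,\text{input}}}{||\boldsymbol{H}^{0}_{i,l,\text{input}}||}$ as a right multiplication by a fixed matrix $\boldsymbol{D}_i$. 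This gives $\boldsymbol{A}_{i,l,\text{refined}}=\boldsymbol{A}_{i,l}\boldsymbol{D}_i$, and its entries are linear combinations of the entries of $\boldsymbol{A}_{i,l}$.

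\textbf{Step 1 (trace expansion).} Write $\widetilde{\boldsymbol{B}}_i=\boldsymbol{B}_{i,l,t}\odot\boldsymbol{M}^{B}_{i,l,t}$. Then
\begin{equation*}
\left<\boldsymbol{w}_{i,l,t},\boldsymbol{w}_{j,l,t}\right>_F=\operatorname{tr}\left(\boldsymbol{D}_i^{\top}\boldsymbol{A}_{i,l}^{\top}\widetilde{\boldsymbol{B}}_i^{\top}\widetilde{\boldsymbol{B}}_j\boldsymbol{A}_{j,l}\boldsymbol{D}_j\right).
\end{equation*}
This is a sum of terms, each of which is a product of one entry of $\boldsymbol{A}_{i,l}$, one entry of $\boldsymbol{A}_{j,l}$, entries of $\widetilde{\boldsymbol{B}}_i$ and $\widetilde{\boldsymbol{B}}_j$, and fixed coefficients from $\boldsymbol{D}_i,\boldsymbol{D}_j$.

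\textbf{Step 2 (independence and zero mean).} Condition on $\widetilde{\boldsymbol{B}}_i$ and $\widetilde{\boldsymbol{B}}_j$. The conditional expectation factorises as $\mathbb{E}[\boldsymbol{A}_{i,l}]^{\top}(\cdots)\mathbb{E}[\boldsymbol{A}_{j,l}]$, and both factors equal $\boldsymbol{0}$ because the entries are $\mathcal{N}(0,1)$ and drawn independently on different devices, by (\ref{eq:orgA}). The tower property then gives zero for the full expectation. The same factorisation also recovers $\mathbb{E}[\boldsymbol{A}_{i,l,\text{refined}}^{\top}\boldsymbol{A}_{j,l,\text{refined}}]=\boldsymbol{D}_i^{\top}\mathbb{E}[\boldsymbol{A}_{i,l}]^{\top}\mathbb{E}[\boldsymbol{A}_{j,l}]\boldsymbol{D}_j=\boldsymbol{0}$. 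This shows that neither the refinement nor the sparsity changes the orthogonality of the projections, consistent with the statement's title.

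\textbf{Step 3 (collision-free reinforcement).} Under the cluster design, $\boldsymbol{M}^{B}_{i,l,t}\odot\boldsymbol{M}^{B}_{j,l,t}=\boldsymbol{0}$ can hold. In that case the entrywise supports of $\widetilde{\boldsymbol{B}}_i$ and $\widetilde{\boldsymbol{B}}_j$ are disjoint, which further removes the diagonal contributions of $\widetilde{\boldsymbol{B}}_i^{\top}\widetilde{\boldsymbol{B}}_j$. This step is not needed for the zero expectation, but it explains the role of the sparsity.

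\textbf{Main obstacle.} The delicate point is justifying the conditioning in Step 2. In local training, $\boldsymbol{B}_{i,l,t}$ depends on $\boldsymbol{A}_{i,l}$, and after aggregation it may also depend on $\boldsymbol{A}_{j,l}$. So $\boldsymbol{A}_{j,l}$ is not obviously independent of $(\widetilde{\boldsymbol{B}}_i,\widetilde{\boldsymbol{B}}_j,\boldsymbol{A}_{i,l})$. The first thing to try is to state explicitly the paper's modelling assumption that the adapters and masks of distinct devices are independent. Under that assumption, $(\boldsymbol{A}_{i,l},\widetilde{\boldsymbol{B}}_i)$ is independent of $(\boldsymbol{A}_{j,l},\widetilde{\boldsymbol{B}}_j)$, and the expectation factorises as $\operatorname{tr}\left(\mathbb{E}[\boldsymbol{D}_i^{\top}\boldsymbol{A}_{i,l}^{\top}\widetilde{\boldsymbol{B}}_i^{\top}]\,\mathbb{E}[\widetilde{\boldsymbol{B}}_j\boldsymbol{A}_{j,l}\boldsymbol{D}_j]\right)$. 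One then still needs one of these factors to vanish. The argument for this is the symmetry $\boldsymbol{A}_{j,l}\mapsto-\boldsymbol{A}_{j,l}$ of the Gaussian law: if the training dynamics are equivariant under it, then $\widetilde{\boldsymbol{B}}_j\mapsto-\widetilde{\boldsymbol{B}}_j$. However, the product $\widetilde{\boldsymbol{B}}_j\boldsymbol{A}_{j,l}$ is invariant under that sign flip, so this argument may fail. If it does, the fallback is to state the result with $\widetilde{\boldsymbol{B}}_j$ treated as independent of $\boldsymbol{A}_{j,l}$, i.e. a fixed-$\boldsymbol{B}$ analysis as in \cite{FlyLoRA}.
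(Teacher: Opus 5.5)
\textbf{The gap.} The gap is the one you flag under \textbf{Main obstacle}, and it cannot be closed inside the expectation route. Step~2 conditions on $\widetilde{\boldsymbol{B}}_i,\widetilde{\boldsymbol{B}}_j$ and then uses $\mathbb{E}[\boldsymbol{A}_{i,l}]=\mathbb{E}[\boldsymbol{A}_{j,l}]=\boldsymbol{0}$. Under the update (\ref{eq:updateB}), however, the chain rule makes the $\boldsymbol{B}$-gradient equal to (the masked) $\nabla_{\boldsymbol{W}}F_j\,\boldsymbol{A}_{j,l,\text{refined}}^{\top}$. So $\widetilde{\boldsymbol{B}}_j$ is a function of $\boldsymbol{A}_{j,l}$, and the conditional law of $\boldsymbol{A}_{j,l}$ is not centred.

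This is not a technicality. Take $\boldsymbol{M}^{B}_{i,l,t}=\boldsymbol{M}^{B}_{j,l,t}=\boldsymbol{1}$ and $\boldsymbol{B}$ initialised at zero as in standard LoRA. After one step, $\boldsymbol{w}_{j,l,t}=-\eta\boldsymbol{G}_j\boldsymbol{D}_j^{\top}\boldsymbol{A}_{j,l}^{\top}\boldsymbol{A}_{j,l}\boldsymbol{D}_j$, where $\boldsymbol{G}_j=\nabla_{\boldsymbol{W}}F_j$ is evaluated at $\boldsymbol{W}_0$ and is therefore independent of $\boldsymbol{A}_{j,l}$. Since $\mathbb{E}[\boldsymbol{A}_{j,l}^{\top}\boldsymbol{A}_{j,l}]=r_l\boldsymbol{I}$, even full independence across devices gives
$\mathbb{E}\langle\boldsymbol{w}_{i,l,t},\boldsymbol{w}_{j,l,t}\rangle_F=\eta^2r_l^2\langle\boldsymbol{G}_i\boldsymbol{D}_i^{\top}\boldsymbol{D}_i,\boldsymbol{G}_j\boldsymbol{D}_j^{\top}\boldsymbol{D}_j\rangle_F$.
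This is nonzero for generic task gradients. Your sign-flip worry is therefore justified: the mean of $\widetilde{\boldsymbol{B}}_j\boldsymbol{A}_{j,l}$ really is nonzero. The fixed-$\boldsymbol{B}$ fallback proves a different statement, and its hypothesis contradicts the training rule.

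\textbf{What the paper does instead.} The paper's proof never averages over $\boldsymbol{A}$. It writes the inner product as a trace, brings the two projections together, and invokes the Gaussian near-orthogonality behind (\ref{eq:orgA}), asserting $\boldsymbol{A}_{i,l}\boldsymbol{A}_{j,l}^{\top}=\boldsymbol{0}$ to kill the trace. This uses a property of the pair $(\boldsymbol{A}_{i,l},\boldsymbol{A}_{j,l})$ alone, so it does not care how $\widetilde{\boldsymbol{B}}$ depends on $\boldsymbol{A}$. That is the idea your route lacks.

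Be aware of two problems with the paper's version, though. First, it treats as exact something that holds only approximately: independent Gaussian rows are nearly, not exactly, orthogonal. Second, its rewriting of $\boldsymbol{A}_{j,l}^{\top}\boldsymbol{A}_{i,l}$ as $\boldsymbol{A}_{i,l}\boldsymbol{A}_{j,l}^{\top}$ is not an identity; by cyclicity, the product that actually appears is $\boldsymbol{A}_{j,l}\boldsymbol{D}_j\boldsymbol{D}_i^{\top}\boldsymbol{A}_{i,l}^{\top}$.

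The defensible form of this route is a high-probability bound, uniform in the expansion matrices:
$|\langle\boldsymbol{w}_{i,l,t},\boldsymbol{w}_{j,l,t}\rangle_F|=|\operatorname{tr}(\widetilde{\boldsymbol{B}}_i^{\top}\widetilde{\boldsymbol{B}}_j\boldsymbol{A}_{j,l}\boldsymbol{D}_j\boldsymbol{D}_i^{\top}\boldsymbol{A}_{i,l}^{\top})|\le\|\widetilde{\boldsymbol{B}}_i\|_F\|\widetilde{\boldsymbol{B}}_j\|_F\|\boldsymbol{A}_{j,l}\boldsymbol{D}_j\boldsymbol{D}_i^{\top}\boldsymbol{A}_{i,l}^{\top}\|_2$.
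For $\boldsymbol{D}_i=\boldsymbol{D}_j=\boldsymbol{I}$, this is at most of order $\sqrt{r_l/k_l}\,\|\boldsymbol{w}_{i,l,t}\|_F\|\boldsymbol{w}_{j,l,t}\|_F$, using $\|\boldsymbol{w}_{i,l,t}\|_F\ge\sigma_{\min}(\boldsymbol{A}_{i,l})\|\widetilde{\boldsymbol{B}}_i\|_F$. An exact zero, pathwise or in expectation, is not available for trained adapters, so this approximate statement is what you should aim to prove.
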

\begin{proof}
    See Appendix B.
\end{proof}
From Theorem $\ref{thm:theorem2}$, we can observe that both the proposed sparse activation (for minimizing interference during fine-tuning) and the refinement method (for minimizing interference during inference) maintain orthogonality between adapters, thereby ensuring that inter-adapter interference is minimized.

Next, to activate related expert after obtaining the relevant coefficient through the matching between specific inputs $\boldsymbol{H}_{l,\text{input}}$ and the refined projection matrix  $\boldsymbol{A}_{i,l,\text{refined}}$, we must filter out irrelevant activation in $\boldsymbol{A}_{i,l,\text{refined}}\boldsymbol{H}_{l,\text{input}}$. 
Thus, we further propose a sparse expert-activated mechanism based on a task-aware vector during inference, where the task type of the input is unknown.
Given an input $\boldsymbol{H}_{l,\text{input}}$ of layer $l$, the output of the proposed top-$k$ activation method is given by 
\begin{equation}
\begin{aligned}
    \boldsymbol{y}=\left(\boldsymbol{B}_{i,l,t}\odot\boldsymbol{M}^{B}_{i,l,t}\right)\text{topk}\left(\boldsymbol{A}_{i,l,\text{refined}}\boldsymbol{H}_{l,\text{input}}\right),
\end{aligned}
\end{equation}
where $\text{topk}\left(\boldsymbol{x}\right)=\boldsymbol{x}\odot\boldsymbol{M}_{k}$ and $\left[{M}_{k}\right]_{x,y}=1$ if $\left[\boldsymbol{x}\right]_{x,y}$ is among the $k$-th largest value in $\boldsymbol{x}$, and $\left[{M}_{k}\right]_{x,y}=0$, otherwise.
By selecting the top-$k$ expert, the task-irrelevant parts of $\boldsymbol{A}_{i,l,\text{refined}}\boldsymbol{H}_{l,\text{input}}$ are effectively suppressed.
It can be seen as a implicit MoE strategy is achieved without an additional explicit MoE router, but rather through implicit task-aware coding.
\section{Simulation Results}
For our simulations, we consider a DFL framework with a circular area and uniformly distributed $M\leq15$ devices across the region, each possessing heterogeneous computing and communication capabilities.
Each device is equipped with the same pre-trained Qwen 2.5-1.5B-Instruct LLM or Qwen 2.5-7B-Instruct LLM, and a fine-tuning dataset that focuses on four capabilities: 
(i) natural language understanding, which includes BoolQ, Piqa, and SocialIqa,
(ii) mathematical reasoning and science, which includes GSM8K, Arc-easy, Arc-challenge, 
(iii) code generation, which includes HumanEval and MBPP,  
and (iv) comprehensive ability, which includes DollyTails, Hellaswag, and ScienceQA.
For comparison, we utilize five baselines :
\begin{itemize}
    \item A decentralized fine-tuning method based on parameter orthogonalization and a sparsification method in LoRI \cite{LoRI}, where devices generate a random device connection under communication and computing resources constraints (labelled "LoRI" in plots).
    \item A decentralized fine-tuning based on traditional LoRA \cite{LoRA} (without parameter orthogonalization and sparsification) and a random device connection under communication resources constraints (labelled "LoRA" in plots).
    \item A hard-routing MoE mechanism, where a perfect MoE router is introduced that can assigned task to the related experts that are fine-tuned based on the specific task dataset accordingly (labelled "Hard-routing MoE" in plots).
    \item A sparse-and-orthogonal with implicit MoE mechanism, which has the same set as the proposed method and devices are allocated into the same cluster (labelled "Proposed (Single Cluster)" in plots).
    \item A full parameter fine-tuning (FPFT) method (labelled "FPFT" in the plots).
\end{itemize}
\begin{figure}[t]
  \centering
  \includegraphics[width=8.5cm]{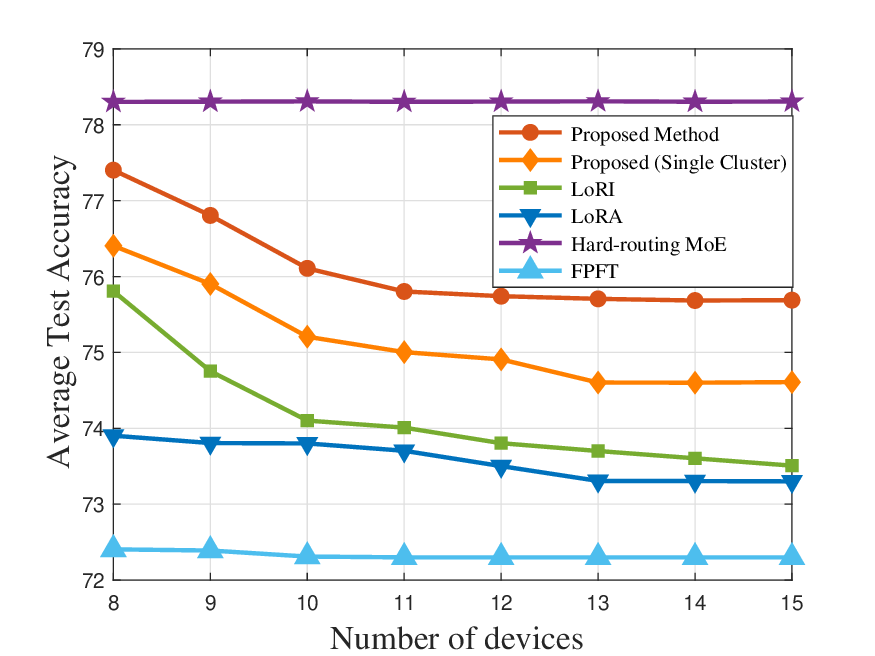}
  \centering
  \caption{Average Parameter Collision Rate vs. the Number of Devices.}
  \vspace{-0.4cm}
  \label{Perf.Num}
\end{figure}
\begin{figure}[t]
  \centering
  \includegraphics[width=8.5cm]{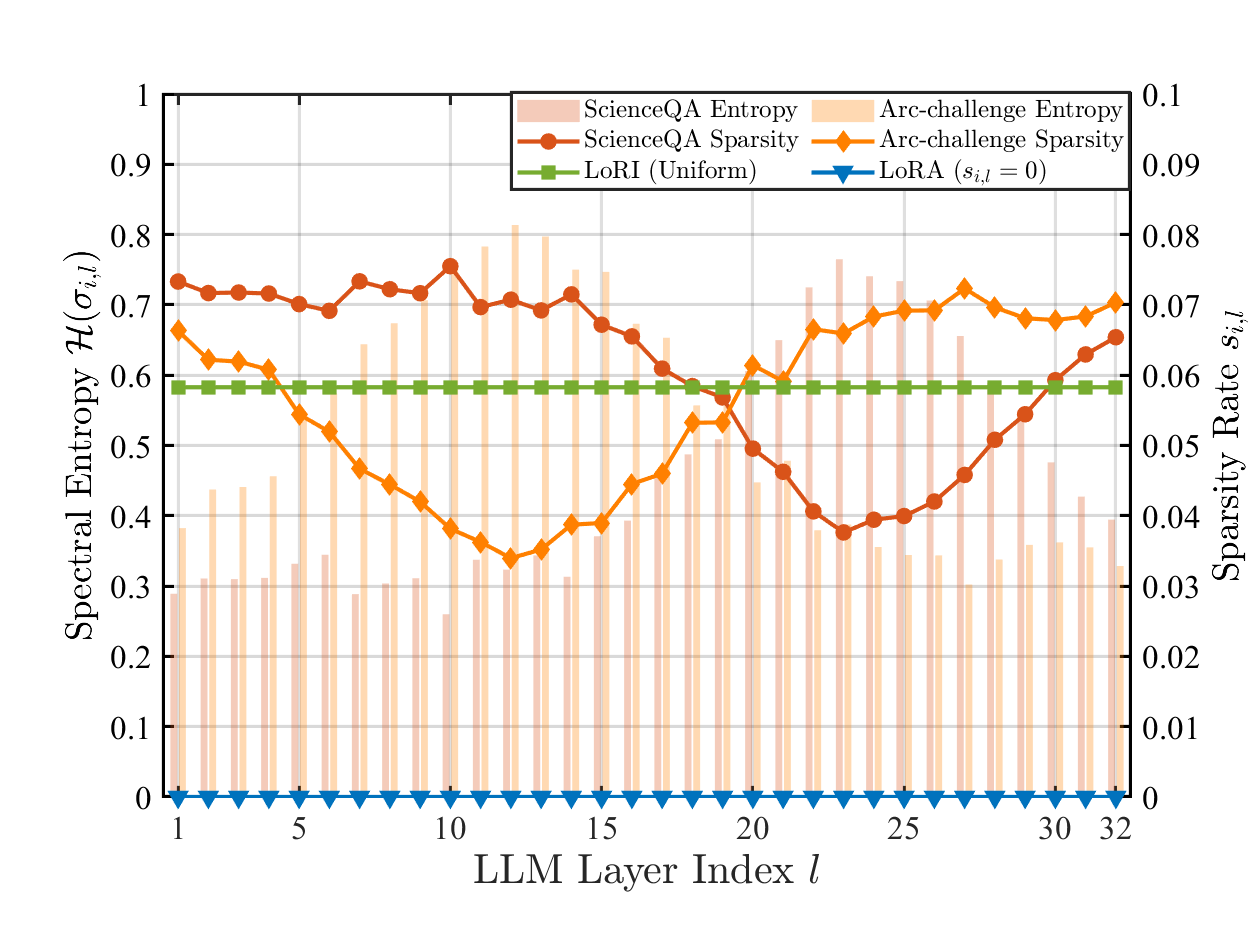}
  \centering
  \caption{Spectral Entropy and Sparsity Rate vs. Layers.}
  \vspace{-0.4cm}
  \label{Spar.Layers}
\end{figure}
\begin{figure}[t]
  \centering
  \includegraphics[width=8.5cm]{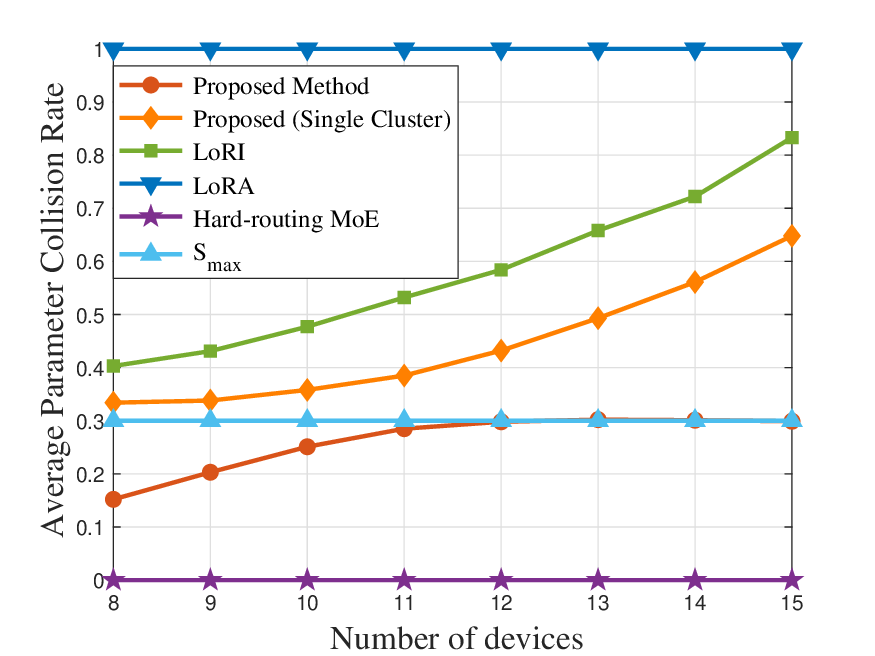}
  \centering
  \caption{Test accuracy vs. the Number of Devices.}
  \vspace{-0.4cm}
  \label{CollisionRate}
\end{figure}

\begin{table*}[htbp]
\centering
\caption{Comparison of different fine-tuning methods using Qwen2.5 models across various benchmarks with $r_l = 32$ in DFL setting. Bold indicates the best-performing method, and underline indicates the second-best.
}
\label{tab:Multiresults}
\small
\setlength{\tabcolsep}{4pt} 
\begin{tabular}{l | l | l | c c c c c c c c | r}
\toprule
\textbf{Method} & \textbf{\# Params (\%)} & \textbf{\# Trans. (\%)} & \textbf{BoolQ} & \textbf{Piqa} & \textbf{SocI.qa} & \textbf{GSM8K} & \textbf{ARC-E.} & \textbf{ARC-C.} & \textbf{Hella.S.} & \textbf{HumanE.} & \textbf{Avg.} \\
\midrule
\rowcolor[HTML]{E8F1FF} 
\multicolumn{12}{l}{\textit{Qwen2.5-1.5B-Instruct}} \\
FPFT & 3.2G (100\%) & 3.2G (100\%) & 79.81 & 80.72 & 79.16 & 77.83 & 84.49 & 64.2 & 87.35 & 62.05 & 76.95 \\
LoRA & 65.5M (2.01\%) & 65.5M (2.01\%) & 76.94 & {78.59} & {79.20} & 71.90 & 62.30 & {52.00} & {68.30} & {47.90} & {68.03} \\
LoRI & 32M (1.05\%) & 32M (1.05\%) & 79.29 & {81.48} & {78.64} & 74.12 & {82.78} & 62.11 & 84.4 & {58.58} & {75.17} \\
Hard-routing MoE & 3.2G (100\%) & 3.2G (100\%) & \textbf{83.23} & \textbf{84.87} & \textbf{83.04} & \textbf{82.44} & \textbf{87.56} & \textbf{69.12} & \textbf{90.91} & \textbf{66.29} & \textbf{80.93} \\
\rowcolor[HTML]{EFEFEF} 
Proposed (Single) & \textbf{16M (0.54\%)} & \textbf{8M (0.27\%)} & 80.21 & {81.69} & {78.38} & {78.21} & {82.95} & {64.25} & 84.74 & {61.09} & 76.44 \\
\rowcolor[HTML]{EFEFEF} 
Proposed Method & \textbf{16M (0.54\%)} & \textbf{8M (0.27\%)} & \underline{81.45} & \underline{83.36} & \underline{79.49} & \underline{80.04} & \underline{84.41} & \underline{65.97} & \underline{86.09} & \underline{62.67} & \underline{78.00} \\
\midrule
\rowcolor[HTML]{E8F1FF} 
\multicolumn{12}{l}{\textit{Qwen2.5-7B-Instruct}} \\
FPFT & 14.2G (100\%) & 14.2G (100\%) & 84.25 & 85.65 & 83.96 & 92.57 & 93.31 & 73.29 & 91.8 & 87.57 & 86.55 \\
LoRA & 362M (2.5\%) & 362M (2.5\%) & 81.32 & {83.88} & 80.99 & {88.56} & 89.38 & 70.25 & {88.18} & 82.95 & 83.19 \\
LoRI & {168M (1.16\%)} & {168M (1.16\%)} & 82.72 & {83.22} & 83.4 & {90.33} & 90.82 & {70.56} & 89.12 & {84.67} & 84.36 \\
Hard-routing MoE & 14.2G (100\%) & 14.2G (100\%) & \textbf{87.67} & \textbf{90.46} & \textbf{87.11} & \textbf{96.84} & \textbf{96.95} & \textbf{78.21} & \textbf{95.58} & \textbf{92.13} & \textbf{90.62} \\
\rowcolor[HTML]{EFEFEF} 
Proposed (Single) & \textbf{87M (0.60\%)} & \textbf{42M (0.30\%)} & 83.33 & {85.74} & {83.74} & 91.14 & 90.69 & {75.17} & 89.78 & {88.08} & 85.96 \\
\rowcolor[HTML]{EFEFEF} 
Proposed Method & \textbf{87M (0.60\%)} & \textbf{42M (0.30\%)} & \underline{84.76} & \underline{86.86} & \underline{85.52} & \underline{92.49} & \underline{92.6} & \underline{76.23} & \underline{91.32} & \underline{89.35} & \underline{87.39} \\
\bottomrule
\end{tabular}
\end{table*}
\begin{table*}[htbp]
\centering
\caption{Comparison of different fine-tuning methods using Qwen2.5 models across various benchmarks with $r_l = 32$ in single task.
Bold indicates the best-performing method, and underline indicates the second-best.}
\label{tab:Singleresults}
\small
\setlength{\tabcolsep}{4pt} 
\begin{tabular}{l | l | c c c c c c c c | r}
\toprule
\textbf{Method} & \textbf{\# Params (\%)} & \textbf{BoolQ} & \textbf{Piqa} & \textbf{SocI.qa} & \textbf{GSM8K} & \textbf{ARC-E.} & \textbf{ARC-C.} & \textbf{Hella.S.} & \textbf{HumanE.} & \textbf{Avg.} \\
\midrule
\rowcolor[HTML]{E8F1FF} 
\multicolumn{11}{l}{\textit{Qwen2.5-1.5B-Instruct}} \\
FPFT & 3.2G (100\%) & \textbf{83.23} & \textbf{84.87} & \textbf{83.04} & \textbf{82.44} & \textbf{87.56} & \textbf{69.12} & \textbf{90.91} & \textbf{66.29} & \textbf{80.93} \\
LoRA & 65.5M (2.01\%) & 80.09 & 83.65 & 79.33 & 75.78 & 84.41 & 63.38 & 85.15 & 60.5 & {76.53} \\
LoRI & 32M (1.05\%) & {81.72} & {84.19} & {80.82} & {77.07} & {85.14} & {64.93} & {86.47} & {61.17} & {77.69} \\
\rowcolor[HTML]{EFEFEF} 
Proposed Method & \textbf{16M (0.54\%)} & \underline{82.88} & \underline{84.54} & \underline{81.25} & \underline{81.36} & \underline{86.30} & \underline{67.02} & \underline{87.63} & \underline{63.94} & \underline{79.37} \\
\midrule
\rowcolor[HTML]{E8F1FF} 
\multicolumn{11}{l}{\textit{Qwen2.5-7B-Instruct}} \\
FPFT & 14.2G (100\%) & \textbf{87.67} & \textbf{90.46} & \textbf{87.11} & \textbf{96.84} & \textbf{96.95} & \textbf{78.21} & \textbf{95.58} & \textbf{92.13} & \textbf{90.62} \\
LoRA & 362M (2.5\%) & 84.54 & 87.73 & 84.40 & 92.32 & 92.52 & 74.18 & 91.76 & 86.22 & 86.71 \\
LoRI & 168M (1.16\%) & {85.15} & {87.03} & {85.55} & {93.89} & {93.60} & {74.48} & {92.26} & {87.34} & {87.41} \\
\rowcolor[HTML]{EFEFEF} 
Proposed Method & \textbf{87M (0.06\%)} & \underline{86.18} & \underline{88.71} & \underline{86.79} & \underline{94.08} & \underline{93.73} & \underline{77.97} & \underline{93.23} & \underline{90.71} & \underline{88.93} \\
\bottomrule
\end{tabular}
\end{table*}

In Fig. \ref{Perf.Num}, we show how the average test accuracy changes as the number of devices varies. 
From Fig. \ref{Perf.Num} we can also observe that the hard-routing MoE and FPFT have the highest and lowest average performance, respectively, and their performance does not change with the number of devices.
This is due to the fact that the hard-routing MoE mechanism employs a perfect MoE-router that can assign the tasks to the most related expert, which can effectively prevent catastrophic knowledge forgetting and multi-task knowledge interference.
On the other hand, FPFT suffers from the heavily catastrophic knowledge forgetting and multi-task knowledge interference since parameters sub-space direction and parameter overlap with each other.
From Fig. \ref{Perf.Num}, we can observe that the average test accuracy of the considered methods decreases as the number of devices increases, except for the hard-routing MoE and FPFT method.
This is due to the fact that catastrophic knowledge forgetting and multi-task knowledge interference occur when more devices participate in the parameter aggregation process, which leads to an average accuracy drop.
From Fig. \ref{Perf.Num}, we can see that the proposed method achieves a $1.3\%$ performance increase compared with the proposed method (single cluster).
This is because the one cluster aggregation may causes unaccaptable parameter collision rate, which also leads to catastrophic knowledge forgetting and multi-task knowledge interference and decreases collaborative fine-tuning performance.
Similarly, the proposed method exceeds the LoRA and LoRI methods due to its orthogonal projection matrix design and layer-wise sparsity expansion matrix, respectively.


In Fig. \ref{Spar.Layers}, we show how the spectral entropy and sparsity rate change across layers in Qwen2.5-7B-Instruct.
Specifically, for simplicity, we only present results for the ScienceQA and Arc-challenge tasks.
As shown in Fig. \ref{Spar.Layers}, the spectral entropy of the ScienceQA and Arc-challenge tasks exhibits a complementary and alternating pattern across layers, rather than following a synchronized trend.
Specifically, in the initial layers, the ScienceQA task exhibits lower spectral entropy compared to the higher values of the Arc-challenge task; this relationship is reversed in the final layers, where the two tasks mirror each other's earlier behaviour.
This implies the difference between tasks' requirements across layers, and leaves room for adaptive sparsity rate allocation accordingly.
From Fig. \ref{Spar.Layers}, we can also observe that the proposed method reallocates the sparsity rate of these tasks across layers based on their spectral entropy distribution.
On the other hand, he considered baseline LoRA and LoRI employ a uniform and non-sparse strategy across layers, respectively, which caused a mismatch between sthe parsity rate and the task requirements of different layers.
Similar to the observation from Fig.\ref{Perf.Num}, which indicates that our proposed method reallocates the sparsity rate across layers, can achieve a lower parameter collision rate and higher performance.

In Fig. \ref{CollisionRate}, we show how the average parameter collision rate changes as the number of devices varies. 
From Fig. \ref{CollisionRate}, we can observe that the proposed method achieves the lowest average parameter collision rate compared with the LoRI and LoRA method, which is strictly constrained by the predefined threshold $S_{\max}$, whereas the other baselines exhibit higher parameter collision rates.
This is due to the fact that the proposed algorithm can reduce the parameter collision rate through cluster-based DFL device connection topology design, which considers each device's sparsity rate and the number of devices in the cluster.
In contrast, the average parameter collision rate of LoRI increases as the number of devices participating in the DFL increases, while LoRA, without parameter sparsity, consistently experiences a $100\%$ parameter collision rate.
Due to the increased parameter collision rate between devices, the corresponding performance is shown in Fig. \ref{Perf.Num}. 
As the average parameter collision rate of LoRI increases as the number of devices participating in the DFL increases, while LoRA, lacking parameter sparsity, consistently experiences a $100\%$ parameter collision rate.
Taking both Fig. \ref{Perf.Num} and Fig. \ref{CollisionRate} into consideration, we can see that the proposed cluster method can significantly improve the performance of the static-and-orthogonal projection matrix and layer-wise sparsity expansion matrix design by comparing the performance between the proposed method and the proposed (single cluster) scheme.
We can also observe from both Fig. \ref{Perf.Num} and Fig. \ref{CollisionRate} that the cluster method and subspace orthogonal projection matrix can effectively reduce parameter interference and enable convergence of fine-tuning.

In Tab. \ref{tab:Multiresults} and Tab. \ref{tab:Singleresults}, we show the detailed multi-task DFL average performance and single-task fine-tuning performance without DFL, respectively.
In particular, we show the detailed performance of each task that contains $10$ devices fine-tuning on different strategies in the DFL scenario in Tab. \ref{tab:Multiresults}.
From Tab. \ref{tab:Multiresults}, we can observe that the proposed method can reduce the number of parameters that need to be updated compared to the traditional LoRA method and FPFT method by up to $73\%$ and $99.46\%$, respectively.
We can also observe that the proposed method can reduce the number of parameters that need to be transmitted compared to the traditional LoRA method and FPFT method by up to $86.56\%$ and $99.7\%$, respectively.
This is due to the fact that in the proposed method, $\boldsymbol{A}_{i,l}$ is kept static during fine-tuning, which does not need to be updated and transmitted, which can reduce both the computing and communication overhead.
From Tab. \ref{tab:Singleresults}, we can observe that despite a lower computational budget, the proposed method outperforms traditional LoRA and LoRI baseline variants with the same rank across all datasets.
The outperformance of the LoRA is due to the static projection matrix, which will not compromise accuracy but also achieve a more stable convergence than LoRA, which has twice the number of parameters to be fine-tuned.
It can also prove that compared to the broader parameter space of the pre-trained model, the fixed parameter sub-spaces projected by fixed $\boldsymbol{A}_{i,l}$ are affordable for the specific task.
Besides, the fixed $\boldsymbol{A}_{i,l}$ with a lower degree of freedom can achieve a stable convergence and higher performance compared to the traditional methods.
The outperformance of the proposed method compared to the LoRI method can be attributed to the layer-wise sparsity expansion design.
In particular, the layer-wise sparsity expansion design can achieve an efficient sparsity assignment across layers, thus makes an task-orient sparsity scheme that can achieve a better performance.
\section{Conclusion}
In this paper, we developed a novel DFL framework that enables devices to collaboratively fine-tune an LLM via exchanging their local updates based on heterogeneous datasets with their neighboring device without reliance on a centralized server.
We formulated an optimization problem whose goal is to minimize fine-tuning loss across datasets while meeting the delay, transmit power, and computing power resource constraints in each DFL round.
Then, we identified three key problems around the life-cycle of DFL: (i) catastrophic forgetting during fine-tuning, (ii) communication inefficiencies during aggregation, and (iii) multi-task interference during inference.
To solve these problems, we first introduced a static-and-orthogonal along with a layer-wise sparsity LoRA method to ensure update orthogonality and reduce parameter overlap while reducing the communication overhead.
Then, we designed a cluster-based device connection topology based on the analysis of the impact of device connection and multi-task performance.
Finally, we proposed a task-aware coding and implicit MoE mechanism that can activate the specified expert and filter the unrelated parameters during inference.
Simulation results demonstrated that the proposed algorithm can achieve robust DFL compared to traditional LoRA and LoRI methods, while ensuring lower computational and communication resource consumption.
For future work, we will explore more generalized scenarios where local fine-tuning is unavailable. 
In such cases, lightweight techniques, specifically textual inversion and collaborative inference, will be central to our problem formulation.

\section{Appendix}
\subsection{Proof of Theorem 1}
Under assumptions, we have
\begin{equation}
        \begin{aligned}\label{proof_1}
             & \sum\limits_{i\in\mathcal{M}_{i,t}} E||\boldsymbol{A}_{i,l}\boldsymbol{B}_{i,l,t}\odot\boldsymbol{M}_{i,t}\!\!-\!\!\frac{1}{|\mathcal{M}_{i,t}|}\boldsymbol{A}_{i,l}\!\!\sum\limits_{j\in\mathcal{M}_{i,t}}\boldsymbol{B}_{j,l,t}\!\!\odot\!\!\boldsymbol{M}_{j,t}||^{2} \\
             \leq & \!\! \sum\limits_{i\in\mathcal{M}_{i,t}} \!\! 2 E||\boldsymbol{A}_{i,l}\boldsymbol{B}_{i,l,t}\odot\boldsymbol{M}_{i,t}\!-\!\frac{1}{|\mathcal{M}_{i,t}|} \boldsymbol{A}_{i,l} \!\!\sum\limits_{j\in\mathcal{M}_{i,t}}\boldsymbol{B}_{j,l,t} \!\!\odot \!\!\boldsymbol{M}_{i,t}|| \\
             & + \sum\limits_{i\in\mathcal{M}_{i,t}} 2 E||\frac{1}{|\mathcal{M}_{i,t}|} \boldsymbol{A}_{i,l} \sum\limits_{j\in\mathcal{M}_{i,t}}\boldsymbol{B}_{j,l,t} \odot \boldsymbol{M}_{i,t} \\
             & - \frac{1}{|\mathcal{M}_{i,t}|} \boldsymbol{A}_{i,l} \sum\limits_{j\in\mathcal{M}_{i,t}}\boldsymbol{B}_{j,l,t}\odot\boldsymbol{M}_{j,t}||.
    \end{aligned}
\end{equation}
For the first term in (\ref{proof_1}), we have
\begin{equation}
        \begin{aligned}
             & \sum\limits_{i\in\mathcal{M}_{i,t}} E||\boldsymbol{A}_{i,l}\boldsymbol{B}_{i,l,t}\odot\boldsymbol{M}_{i,t}\!\!-\!\!\frac{1}{|\mathcal{M}_{i,t}|} \boldsymbol{A}_{i,l} \!\!\!\!\sum\limits_{j\in\mathcal{M}_{i,t}}\!\!\boldsymbol{B}_{j,l,t} \odot \boldsymbol{M}_{i,t}|| \\
             = & \sum\limits_{i\in\mathcal{M}_{i,t}} E||\boldsymbol{A}_{i,l}\frac{1}{|\mathcal{M}_{i,t}|}\sum\limits_{j\in\mathcal{M}_{i,t}}\left(\boldsymbol{B}_{i,l,t}-\boldsymbol{B}_{j,l,t}\right) \odot ( \boldsymbol{M}_{j,t} \odot \boldsymbol{M}_{i,t} \\
             & + \left|\boldsymbol{M}_{j,t}-\boldsymbol{M}_{i,t}\right|\odot\boldsymbol{M}_{i,t} )|| \\
             = & \sum\limits_{i\in\mathcal{M}_{i,t}} E||\boldsymbol{A}_{i,l}\frac{1}{|\mathcal{M}_{i,t}|}\sum\limits_{j\in\mathcal{M}_{i,t}}\left(\boldsymbol{B}_{i,l,t}-\boldsymbol{B}_{j,l,t}\right) \odot \boldsymbol{M}_{i,j,t}^{c}||.
    \end{aligned}\label{proof_2}
\end{equation}
Due to the definition of parameter collision rate, (\ref{proof_2}) can be further given by
\begin{equation}
        \begin{aligned}
            & rk \sum\limits_{i\in\mathcal{M}_{i,t}} \!\!\!\sum\limits_{j\in\mathcal{M}_{i,t}}S_{\mathcal{M}_{i,t},l}
             E||\boldsymbol{A}_{i,l}\left(\boldsymbol{B}_{i,l,t}\!-\!\frac{1}{|\mathcal{M}_{i,t}|}\sum\limits_{j\in\mathcal{M}_{i,t}}\!\!\!\boldsymbol{B}_{j,l,t}\right)|| \\
             = & rk \sum\limits_{i\in\mathcal{M}_{i,t}} \!\!\!\sum\limits_{j\in\mathcal{M}_{i,t}}S_{\mathcal{M}_{i,t},l}
             E|| - \eta\boldsymbol{A}_{i,l}\nabla_{\boldsymbol{B}_{i,l,t}}F_i||\\
             \leq & rk \sum\limits_{i\in\mathcal{M}_{i,t}} \!\!\!\sum\limits_{j\in\mathcal{M}_{i,t}}S_{\mathcal{M}_{i,t},l}G.
             \end{aligned}
\end{equation}
The second equation is due to the fact that $\boldsymbol{A}_{i,l}$ is independent to $\frac{1}{|\mathcal{M}_{i,t}|}\sum\limits_{j\in\mathcal{M}_{i,t}}\left(\boldsymbol{B}_{i,l,t}-\boldsymbol{B}_{j,l,t}\right)\left|\boldsymbol{M}_{j,t}-\boldsymbol{M}_{i,t}\right|\odot\boldsymbol{M}_{i,t}$.
Similarly, we can rewrite the second term in (\ref{proof_1}) as
\begin{equation}
        \begin{aligned}
             & \sum\limits_{i\in\mathcal{M}_{i,t}}\!\!\! E||\!\frac{1}{|\mathcal{M}_{i,t}|} \boldsymbol{A}_{i,l}\!\!\! \sum\limits_{j\in\mathcal{M}_{i,t}}\!\!\!\boldsymbol{B}_{j,l,t} \odot \boldsymbol{M}_{j,t} \\
             & -\!\frac{1}{|\mathcal{M}_{i,t}|} \boldsymbol{A}_{i,l} \!\!\!\!\sum\limits_{j\in\mathcal{M}_{i,t}}\!\!\!\boldsymbol{B}_{j,l,t}\odot\boldsymbol{M}_{i,t}|| \leq rk \sum\limits_{i\in\mathcal{M}_{i,t}} S_{\mathcal{M}_{i, t}} P.
    \end{aligned}
\end{equation}
Thus, we have 
\begin{equation}
    \begin{aligned}
    & \sum\limits_{i\in\mathcal{M}_{i,t}} E||\boldsymbol{A}_{i,l}\boldsymbol{B}_{i,l,t}\odot\boldsymbol{M}_{i,t}\!-\!\frac{1}{|\mathcal{M}_{i,t}|}\boldsymbol{A}_{i,l}\!\!\sum\limits_{i\in\mathcal{M}_{i,t}}\!\!\boldsymbol{B}_{j,l,t}\odot\boldsymbol{M}_{j,t}||^{2} \\
    & \leq 2rk \sum\limits_{i\in\mathcal{M}_{i,t}} \sum\limits_{j\in\mathcal{M}_{i,t}}S_{\mathcal{M}_{i,t},l}\left(G+P\right).
    \end{aligned}
\end{equation}
This ends the proof. 

\subsection{Proof of Theorem 2}
By the definition of the Frobenius inner product:
\begin{equation}
        \begin{aligned}
            &\left<\boldsymbol{w}_{i,l,t},\boldsymbol{w}_{j,l,t}^{\top}\right>_F=\text{Tr}\left(\boldsymbol{w}_{i,l,t}^{\top},\boldsymbol{w}_{j,l,t}\right)\\
            =&\text{Tr}\left(\left(\boldsymbol{B}_{j,l,t}\odot \boldsymbol{M}_{j,l,t}^{B}\right)\boldsymbol{A}_{j,l,\text{refined}}\boldsymbol{A}_{i,l,\text{refined}}^{\top}\left(\boldsymbol{B}_{i,l,t}\odot \boldsymbol{M}_{i,l,t}^{B}\right)^{\top}\right)\\
            =&\text{Tr}\!\left(\!\!\left(\boldsymbol{B}_{j,l,t}\!\odot \!\boldsymbol{M}_{j,l,t}^{B}\right)\!\!
            \frac{\left[\boldsymbol{H}^{0}_{j,l}\right]^{\top}}{||\boldsymbol{H}^{0}_{j,l}||}\boldsymbol{A}_{j,l}^{\top}\boldsymbol{A}_{i,l}\!\!\frac{\boldsymbol{H}^{0}_{i,l}}{||\boldsymbol{H}^{0}_{i,l}||}\!\left(\boldsymbol{B}_{i,l,t}\!\odot \!\boldsymbol{M}_{i,l,t}^{B}\right)^{\top}\!\!\!\right)\\
            =& \text{Tr}\!\left(\!\!\!\left(\boldsymbol{B}_{j,l,t}\!\odot\! \boldsymbol{M}_{j,l,t}^{B}\right)\!\!\frac{\left[\boldsymbol{H}^{0}_{j,l}\right]^{\top}}{||\boldsymbol{H}^{0}_{j,l}||}\!\!\left(\!\!\!\boldsymbol{A}_{i,l}
            \boldsymbol{A}_{j,l}^{\top}\!\right)\!\!\frac{\boldsymbol{H}^{0}_{i,l}}{||\boldsymbol{H}^{0}_{i,l}||}\!\!\left(\boldsymbol{B}_{i,l,t}\!\!\odot \!\!\boldsymbol{M}_{i,l,t}^{B}\right)^{\top}\!\!\!\right),
        \end{aligned}
\end{equation}
where $\boldsymbol{H}^{0}_{i,l}$ is the abbreviation of $\boldsymbol{H}^{0}_{i,l,\text{input}}$.
Since elements in $\boldsymbol{A}_{i,l}$ are random generated from Gauss distribution, $\boldsymbol{A}_{i,l} \boldsymbol{A}_{j,l}^{\top}=0$.
Then, we can prove that $\left<\boldsymbol{w}_{i,l,t},\boldsymbol{w}_{j,l,t}^{\top}\right>_F=\text{Tr}\left(\boldsymbol{w}_{i,l,t}^{\top},\boldsymbol{w}_{j,l,t}\right)=0$.

This completes the proof.

\end{document}